\def\1{\bm{1}}
\def\vareps{{\varepsilon}}
\def\vzero{{\bm{0}}}
\def\vmu{{\bm{\mu}}}
\def\veps{{\bm{\eta}}}
\def\va{{\bm{a}}}
\def\vv{{\bm{v}}}
\def\vw{{\bm{w}}}
\def\vx{{\bm{x}}}
\def\vy{{\bm{y}}}
\def\mA{{\bm{A}}}
\def\mB{{\bm{B}}}
\def\mC{{\bm{C}}}
\def\mI{{\bm{I}}}
\def\mO{{\bm{O}}}
\def\mW{{\bm{W}}}
\def\mX{{\bm{X}}}
\def\mSigma{{\bm{\Sigma}}}
\DeclareMathAlphabet{\mathsfit}{\encodingdefault}{\sfdefault}{m}{sl}
\SetMathAlphabet{\mathsfit}{bold}{\encodingdefault}{\sfdefault}{bx}{n}
\def\gN{{\mathcal{N}}}
\def\gP{{\mathcal{P}}}
\def\gQ{{\mathcal{Q}}}
\def\gR{{\mathcal{R}}}
\def\gU{{\mathcal{U}}}
\def\gX{{\mathcal{X}}}
\def\gZ{{\mathcal{Z}}}
\def\sR{{\mathbb{R}}}
\def\emA{{A}}
\def\emB{{B}}
\def\emC{{C}}
\def\emV{{V}}
\def\emSigma{{\Sigma}}
\newcommand{\E}{\mathbb{E}}
\newcommand{\R}{\mathbb{R}}
\newcommand{\KL}{\mathbb{KL}}
\newcommand{\Cov}{\mathrm{Cov}}
\DeclareMathOperator{\sign}{sign}
\DeclareMathOperator{\Tr}{Tr}
\newcommand{\Ind}[1]{\ensuremath{\mathbbm{1} [#1]}}                     
\newcommand{\NormII}[1]{\ensuremath{\lVert #1 \rVert}_2}              
\newcommand{\InNorm}[1]{{\left\vert\kern-0.2ex\left\vert\kern-0.2ex\left\vert #1 
    \right\vert\kern-0.2ex\right\vert\kern-0.2ex\right\vert}}                    
\newcommand{\InNormII}[1]{{\left\vert\kern-0.2ex\left\vert\kern-0.2ex\left\vert #1 
    \right\vert\kern-0.2ex\right\vert\kern-0.2ex\right\vert}_2}                    
\newcommand{\InNormInfty}[1]{{\left\vert\kern-0.2ex\left\vert\kern-0.2ex\left\vert #1 
    \right\vert\kern-0.2ex\right\vert\kern-0.2ex\right\vert}_{\infty}}           
\newcommand{\Gauss}[2]{\mathcal{N}\left(#1,\; #2\right)}                         
\newcommand{\iid}{i.i.d.~}                                                        
\newtheorem{proposition}{Proposition}
\newtheorem{lemma}{Lemma}
\newtheorem{theorem}{Theorem}
\newtheorem{remark}{Remark}
\newtheorem{corollary}{Corollary}
\newcommand{\thetah}{\widehat{\theta}}
\newcommand{\TV}{\mathsf{TV}}
\DeclareMathOperator{\arccot}{\operatorname{arccot}}
\newcommand{\PF}{\mathcal{P}}
\newcommand{\QF}{\mathcal{Q}}
\begin{document}
%
%
%
%
\title{A Le Cam Type Bound for Adversarial Learning and Applications} 
\author{%
  Qiuling Xu\thanks{Equal contribution.}\\
  Department of Computer Science\\
  Purdue University\\
  West Lafayette, IN 47907, USA\\
  \texttt{xu1230@purdue.edu}
  \and
  Kevin Bello\footnotemark[1]\\
  Department of Computer Science\\
  Purdue University\\
  West Lafayette, IN 47907, USA\\
  \texttt{kbellome@purdue.edu}
  \and
  Jean Honorio\\
  Department of Computer Science\\
  Purdue University\\
  West Lafayette, IN 47907, USA\\
  \texttt{jhonorio@purdue.edu}
}
\date{}
\maketitle
\thispagestyle{empty} 
\begin{abstract}
Robustness of machine learning methods is essential for modern practical applications. 
Given the arms race between attack and defense mechanisms, it is essential to understand the fundamental limits of any conceivable learning method used in an adversarial setting.
In this work, we focus on the problem of learning from noise-injected data, where the existing literature falls short by either assuming a specific adversary model or by over-specifying the learning problem. 
We shed light on the information-theoretic limits of adversarial learning without assuming a particular adversary. 
Specifically, we derive a \textit{general} Le Cam type bound for learning from noise-injected data.
Finally, we apply our general bounds to a canonical set of non-trivial learning problems and provide examples of common types of noise-injected data.
\end{abstract}


\section{Introduction}

Modern machine learning models  can be vulnerable to different types of adversarial attacks.
This vulnerability affects a wide range of domains, including, computer vision \citep{DBLP:journals/corr/GoodfellowSS14, DBLP:conf/iclr/FischerKMB17}, video surveillance \citep{DBLP:conf/ndss/LiNPSKRS19}, natural language processing \citep{DBLP:conf/acl/EbrahimiRLD18}, voice recognition \citep{DBLP:conf/sp/Carlini018}, as well as different learning problems including classification \citep{DBLP:journals/corr/GoodfellowSS14}, regression \citep{DBLP:journals/tsp/BaldaBM19}, node embeddings \citep{DBLP:conf/icml/BojchevskiG19}, generative models \citep{DBLP:conf/eurosp/PasquiniMB19}, among others.

In the literature, one can differentiate two general lines of work regarding the type of data (training or testing) an adversary can attack.
Perhaps the most widely considered type of attack is at test time, where some important theoretical prior work in this regime includes 
generalization bounds for adversarial learning~\citep{conf/icml/YinRB19, attias2018improved}, 
the study of different notions of robustness certification for inference~\citep{conf/icml/CohenRK19,hein2017formal,random_smooth2}, 
robustly PAC learnability of VC classes \citep{montasser2019vc},
and analysis of the effect of injecting noise in the network at inference time \citep{random_smooth1}.
While important, generalization in the aforementioned works does not quantify the effect of adversarial attacks at the learning stage
and, thus, disregards the statistical challenges brought up by learning from noise-injected data.
Tackling the issues mentioned above, the work of~\cite{sinha2017certifying} provides bounds for learning from noise-injected data, versus learning from the ``original'' data distribution.

A key question that has not been addressed in the prior literature is the study of the fundamental \textit{statistical limits} of adversarial robustness at the \textit{learning stage}, for which we refer to as \textit{adversarial learning}.
That is, how much is a learner affected when using noise-injected data, instead of having access to the original data distribution.
To the best of our knowledge, the work of \cite{huber1992robust} is among the first to formalize this type of question through an $\epsilon$-contamination model, in which an $\epsilon$ fraction of observations are subject to arbitrary adversarial noise.
Lately, the $\epsilon$-contamination model has received some attention from the community, for which authors in \cite{du2018robust} develop algorithms for robust nonparametric regression, and the work of \cite{chen2016general} derives a general decision theory.
In this work, while we consider the question of noise-injected data at learning time, we take the approach of assuming that the attacker has a ``budget'' of how much noise is injected to the data.
We relate this budget to the \textit{total variation (TV) distance} between the \textit{original data distribution} and the \textit{noise-injected data distribution}.
Our choice of using the TV distance is motivated by the fact that the TV distance is a core statistical distance between probability measures and has several connections to other distances (see, e.g., \cite{gibbs2002choosing}). 
Also, the TV distance has connections to upper and lower bounds in the study of adversarial robustness (see, e.g., \cite{bhagoji2019lower, thekumparampil2018robustness, random_smooth1}), thus, we consider the TV to be a general and flexible notion of distance.
Finally, a similar assumption, albeit for generalization bounds, was made in~\cite{sinha2017certifying} which used the Wasserstein distance.

Our main contributions are as follows: 

1. To the best of our knowledge, we are the first to provide a general information-theoretic lower bound for the problem of learning from noise-injected data.
Our goal is not to specify a particular algorithm modeling the adversary.
Instead, we assume that the adversary has a ``budget'' in total variation distance between the original data distribution and the noise-injected data distribution.

2. To this end, we extend Le Cam's bound to the adversarial learning regime.
Our general framework allows for the analysis of learning problems and noise injection methods in a somewhat independent fashion.

3. We illustrate our general framework by applying it on three canonical learning problems: mean estimation, binary classification and Procrustes analysis.
In addition, we analyze two types of noise injection methods, by using either a (multivariate) Gaussian or a (multivariate) uniform noise.
By combining the above results, we deliver lower bounds for six different scenarios in total.
\section{A Brief Review of the General Minimax Risk Framework and Le Cam's Lemma}
\label{sec:minimax_framework}
	In this section we briefly review the minimax framework in the context of general statistical problems.
	The minimax framework consists of a well defined objective that aims to shed light about the optimality of algorithms and has been widely used in statistics and machine learning \citep{wainwright2019high,wasserman2006all}.
	The standard minimax risk considers a family of distributions $\gQ$ over a sample space $\gZ$, and a function $\theta : \gQ \to \Theta$ defined on $\gQ$, that is, a mapping $Q \mapsto \theta(Q)$.
	Here we call $\theta(Q)$ the parameter of the distribution $Q$.
	We aim to estimate the parameter $\theta(Q)$ based on a sequence of $n$ \iid observations $Z = (z_1,\ldots,z_n)$ drawn from the (unknown) distribution $Q$, that is, $Z \in \gZ^n$.
	To evaluate the quality of an estimator $\thetah$, we let $\rho : \Theta \times \Theta \to \sR_+$ denote a semi-metric on the space $\Theta$, which we use to measure the error of an estimator $\thetah$ with respect to the parameter $\theta(Q)$.
	For a distribution $Q \in \gQ$ and for a given estimator $\thetah : \gZ^n \to \Theta$, we assess the quality of the estimate $\thetah(Z)$ in terms of the (expected) risk:
	\[
		\E_{Z \sim Q^n} \left[ \rho(\thetah(Z), \theta(Q)) \right].
	\]
	A common approach, first suggested by \citep{wald1939contributions}, for choosing an estimator $\thetah$ is to select the one that minimizes the maximum risk, that is,
	\[
		\sup_{Q \in \gQ} \E_{Z \sim Q^n} \left[ \rho(\thetah(Z), \theta(Q)) \right].
	\]
	An optimal estimator for this semi-metric then gives the minimax risk, which is defined as:
	\begin{align}
	\label{eq:standard_minimax}
		\inf_{\thetah} \sup_{Q \in \gQ} \E_{Z\sim Q^n} \left[ \rho( \thetah (Z), \theta(Q) ) \right],
	\end{align}
	where we take the supremum (worst-case) over distributions $Q \in \gQ$, and the infimum is taken over all estimators $\thetah$.
	
	There are several common approaches to find lower bounds to eq.\eqref{eq:standard_minimax} such as using Fano's, Le Cam's or Assouad's bounds \citep{Yu97}.
	Next, we present a version of Le Cam's bound, which will be later adapted to our adversarial setting.
	\begin{lemma}[Le Cam's bound \citep{Wasserman10}]
	\label{lemma:lecam}
	Let $\gP$ be a family of distributions over the sample space $\gX$.
	For a distribution $P$, let $\theta(P)$ denote a parameter of $P$. 
	Also, let $\thetah: \gX^n \to \Theta$ denote any parameter estimator that receives $n$ \iid samples $S = (x_i, \ldots, x_n)$ coming from some product distribution $P^n$ and outputs an estimate, $\thetah(P)$, of $\theta(P)$.
	Let $d: \Theta \times \Theta \rightarrow \sR_+$ denote a metric\footnote{While it is know that one can relax this assumption by using a pseudo-metric that fulfills the weak triangle inequality (see, for instance, \cite{Yu97}), we use a metric for clarity purposes.} on the space of parameters $\Theta$. 
	Then, for any $P_1, P_2 \in \gP$, we have:
	\begin{align*}
		&\inf_{\thetah} \sup_{P \in \gP} \E_{S \sim P^n} \left[ d(\thetah(P), \theta(P)) \right]  \\
		&\geq \frac{d(\theta(P_1),\theta(P_2))}{4}  \int_{S \in \gX^n}  \min(p_1^n(S), p_2^n(S)) dS.
	\end{align*}
	\end{lemma}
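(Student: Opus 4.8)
The plan is to invoke Le Cam's classical \emph{two-point method}, which reduces the minimax estimation lower bound to the error of a binary hypothesis test between the two fixed distributions $P_1$ and $P_2$. Throughout, I fix an arbitrary estimator $\thetah$ and write $\delta = d(\theta(P_1), \theta(P_2))$; the goal is to show that $\sup_{P \in \gP} \E_{S \sim P^n}[d(\thetah(S), \theta(P))] \geq (\delta/4)\int_{\gX^n} \min(p_1^n, p_2^n)\, dS$, after which taking the infimum over $\thetah$ (the right-hand side being independent of $\thetah$) yields the claim.

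First, I would replace the worst-case supremum by a two-point average: since $P_1, P_2 \in \gP$,
\[
\sup_{P \in \gP} \E_{S \sim P^n}[d(\thetah(S), \theta(P))] \;\geq\; \frac{1}{2} \sum_{j=1}^{2} \E_{S \sim P_j^n}\big[ d(\thetah(S), \theta(P_j)) \big].
\]
Second, I would convert each expected error into a tail probability via Markov's inequality at the level $\delta/2$, giving $\E_{S \sim P_j^n}[d(\thetah(S), \theta(P_j))] \geq (\delta/2)\, \P_{S \sim P_j^n}[d(\thetah(S), \theta(P_j)) \geq \delta/2]$ for $j \in \{1,2\}$.

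The crux of the argument --- and the step I expect to require the most care --- is relating these two tail probabilities to a genuine hypothesis test. I would define the test $\psi(S) = \argmin_{j \in \{1,2\}} d(\thetah(S), \theta(P_j))$. By the triangle inequality, $d(\thetah(S), \theta(P_1)) + d(\thetah(S), \theta(P_2)) \geq \delta$, so on the event $\{\psi = 2\}$ one necessarily has $d(\thetah(S), \theta(P_1)) \geq \delta/2$, and symmetrically $\{\psi = 1\} \subseteq \{ d(\thetah(S), \theta(P_2)) \geq \delta/2 \}$. These inclusions let me lower-bound the sum of tail probabilities by the total error of $\psi$, namely $\P_{S \sim P_1^n}[\psi = 2] + \P_{S \sim P_2^n}[\psi = 1]$.

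Finally, I would bound this testing error from below by the overlap (affinity) integral. Writing $A = \{S : \psi(S) = 1\}$, the total error equals $\int_{A^c} p_1^n\, dS + \int_{A} p_2^n\, dS$, and since $p_1^n \geq \min(p_1^n, p_2^n)$ on $A^c$ and $p_2^n \geq \min(p_1^n, p_2^n)$ on $A$, this is at least $\int_{\gX^n} \min(p_1^n, p_2^n)\, dS$, which is exactly the $\TV$-complement of $(P_1^n, P_2^n)$ and does not depend on the chosen test. Chaining the four displays produces the factor $\frac{1}{2}\cdot\frac{\delta}{2} = \frac{\delta}{4}$ multiplying the overlap integral, completing the bound.
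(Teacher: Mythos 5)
Your proposal is correct and follows essentially the same route as the paper: the paper does not prove Lemma \ref{lemma:lecam} directly (it cites \cite{Wasserman10}), but its proof of Theorem \ref{thm:minimax_2_distributions} is exactly this two-point argument, with your $\psi$ playing the role of the paper's $\Psi$, your Markov step matching the paper's restriction of the expectation to the event $\{\Psi(S)=1\}$, and your averaging step matching the paper's use of $\max(a,b)\geq \tfrac{a+b}{2}$. The only cosmetic difference is that you prove the overlap bound $\Pr_{P_1^n}[\psi=2]+\Pr_{P_2^n}[\psi=1]\geq \int \min(p_1^n,p_2^n)\,dS$ inline, whereas the paper invokes it as Proposition \ref{lemma:pearson_bound}.
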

	
\section{On the Statistical Limits of Adversarial Learning}
	
	In this section, we explain in more detail our framework under consideration, for which we derive information-theoretic bounds.
	Before stating our main results, we present the following proposition that is used for the proof of Theorem \ref{thm:minimax_2_distributions}.
	\begin{proposition}[\cite{Wasserman10}]
	\label{lemma:pearson_bound}
	Let $P_1$ and $P_2$ be two distributions with support on $\gX$, and let $S \sim P^n$ denote a collection of $n$ \iid samples drawn from some distribution $P$.
	For any function, $\Psi: \gX^n \to \{0,1\}$, we have:
	\begin{align*}
	 &\hspace{-0.2in}\Pr_{S\sim P_1^n} \left[\Psi(S) = 1\right] + \Pr_{S\sim P_2^n}\left[\Psi(S) = 0\right]\\
	 &\hspace{0.5in} \geq \int_{S \in \gX^n} \min\left(p_1^n(S), p_2^n(S)\right) dS.
	\end{align*}
	\end{proposition}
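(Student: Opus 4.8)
The plan is to rewrite both probabilities as integrals of the respective densities over the two complementary regions determined by $\Psi$, bound each integrand from below by the pointwise minimum of the two densities, and then recombine the pieces over the whole space.

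First I would set $A \defeq \{S \in \gX^n : \Psi(S) = 1\}$, so that its complement is $A^c = \{S \in \gX^n : \Psi(S) = 0\}$. Because $\Psi$ takes values only in $\{0,1\}$, the sets $A$ and $A^c$ form a partition of $\gX^n$. Writing the two probabilities as integrals of the product densities over these regions gives
\[
\Pr_{S\sim P_1^n}\left[\Psi(S)=1\right] + \Pr_{S\sim P_2^n}\left[\Psi(S)=0\right] = \int_A p_1^n(S)\,dS + \int_{A^c} p_2^n(S)\,dS.
\]

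The key step is an elementary pointwise dominance: on $A$ we trivially have $p_1^n(S) \geq \min(p_1^n(S), p_2^n(S))$, and on $A^c$ we have $p_2^n(S) \geq \min(p_1^n(S), p_2^n(S))$. Substituting these lower bounds into the respective integrals yields
\[
\int_A p_1^n(S)\,dS + \int_{A^c} p_2^n(S)\,dS \geq \int_A \min(p_1^n(S),p_2^n(S))\,dS + \int_{A^c} \min(p_1^n(S),p_2^n(S))\,dS.
\]
Since $A$ and $A^c$ partition $\gX^n$, the two integrals on the right recombine into a single integral of $\min(p_1^n, p_2^n)$ over all of $\gX^n$, which is exactly the claimed lower bound.

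There is no substantial obstacle here; the entire argument is a one-line dominance observation once the events are expressed as integrals, and crucially it holds for \emph{every} fixed $\Psi$, so no optimization over tests is needed. The only point requiring minor care is the measure-theoretic bookkeeping: $p_1^n$ and $p_2^n$ should be read as densities with respect to a common dominating measure (for instance $P_1^n + P_2^n$), which simultaneously justifies the integral representations of the two probabilities and ensures the pointwise minimum is well defined; in the discrete case one simply replaces the integrals by sums. I would also remark that the product/independence structure plays no role beyond notation, so the identical proof goes through verbatim with $p_1^n, p_2^n$ replaced by any pair of densities.
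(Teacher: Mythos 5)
Your proof is correct and is exactly the standard argument behind this result: the paper itself does not reprove the proposition (it simply cites \cite{Wasserman10}), and the cited proof is the same partition-and-pointwise-minimum computation you give, namely $\Pr_{S\sim P_1^n}[\Psi(S)=1]+\Pr_{S\sim P_2^n}[\Psi(S)=0]=\int_A p_1^n+\int_{A^c}p_2^n\geq\int_{\gX^n}\min(p_1^n,p_2^n)$. Your remarks on the dominating measure and on the irrelevance of the product structure are accurate and, if anything, slightly more careful than what the reference spells out.
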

	Consider a distribution $Q$, from a family of distributions $\gQ$ with support $\gX$, from which we draw a sample set $S$, and we aim to estimate a parameter of \textit{another} distribution $P$ coming from a family $\gP$ over the same support $\gX$.
	In order for this to make sense, one has to consider the families $\gP$ and $\gQ$ to be somehow related.
	In our particular case, $\gP$ corresponds to the family of ``original'' distributions for which we would like to have good parameter estimates; while $\gQ$ corresponds to the family of ``adversarial'' (noise-injected) distributions from which we actually observe the dataset $S$.
	The following theorem is our first result and corresponds to an extension of the Le Cam's bound to our adversarial learning setting.
	\begin{theorem}
	\label{thm:minimax_2_distributions}
	Let $\gP$ and $\gQ$ be two families of distributions over the sample space $\gX$.
	For a distribution $P$, let $\theta(P)$ denote a parameter of $P$ that we aim to estimate. 
	Also, let $\thetah: \gX^n \to \Theta$ denote any learning method that receives $n$ \iid samples $S = (x_1, \ldots, x_n)$ coming from some product distribution $Q^n$ and outputs an estimate, $\thetah(S)$, of $\theta(P)$.
	Let $d: \Theta \times \Theta \rightarrow \sR_+$ denote a metric on the space of parameters $\Theta$. 
	Then, for any $P_1, P_2 \in \gP$, we have:
	\begin{align*}
	    &\inf_{\thetah} \sup_{P \in \gP, Q \in \gQ} \E_{S \sim Q^n} \left[ d(\thetah(S), \theta(P)) \right]  \\
	    &\geq \frac{d(\theta(P_1),\theta(P_2))}{4} \sup_{Q_1,Q_2\in \QF} \int_{S \in \gX^n} \hspace{-0.1in} \min(q_1^n(S), q_2^n(S)) dS.
	\end{align*}
	\end{theorem}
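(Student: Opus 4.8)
The plan is to adapt the classical proof of Le Cam's bound (Lemma~\ref{lemma:lecam}), but to carefully keep separate the distribution $P$ whose parameter we estimate from the distribution $Q$ that generates the observed sample. First I would fix the two distributions $P_1, P_2 \in \gP$ appearing in the statement together with two \emph{arbitrary} distributions $Q_1, Q_2 \in \gQ$. Because the supremum in the minimax risk ranges over all pairs $(P,Q) \in \gP \times \gQ$ \emph{independently}, I can lower bound it by restricting attention to the two specific pairs $(P_1, Q_1)$ and $(P_2, Q_2)$, and then replace the resulting maximum of two risks by their average:
\begin{align*}
\sup_{P \in \gP, Q \in \gQ} \E_{S \sim Q^n}\!\left[ d(\thetah(S), \theta(P)) \right] \geq \tfrac{1}{2}\Big( \E_{S \sim Q_1^n}\!\left[ d(\thetah(S), \theta(P_1)) \right] + \E_{S \sim Q_2^n}\!\left[ d(\thetah(S), \theta(P_2)) \right] \Big).
\end{align*}

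Next I would convert the estimator $\thetah$ into a binary hypothesis test. I define $\Psi(S) = 0$ when $d(\thetah(S), \theta(P_1)) \leq d(\thetah(S), \theta(P_2))$ and $\Psi(S) = 1$ otherwise, so that $\Psi$ selects the parameter value closest to the estimate. A triangle-inequality argument using only the metric on $\Theta$ then shows that whenever $\Psi(S) = 1$ we must have $d(\thetah(S), \theta(P_1)) \geq \tfrac{1}{2} d(\theta(P_1), \theta(P_2))$, and symmetrically that $\Psi(S) = 0$ forces $d(\thetah(S), \theta(P_2)) \geq \tfrac{1}{2} d(\theta(P_1), \theta(P_2))$. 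Taking expectations (bounding each risk below by its restriction to the error event via the indicator) converts each risk into $\tfrac{1}{2} d(\theta(P_1), \theta(P_2))$ times a test-error probability, so that the averaged risk above is at least $\tfrac{d(\theta(P_1), \theta(P_2))}{4}\big( \Pr_{S \sim Q_1^n}[\Psi(S) = 1] + \Pr_{S \sim Q_2^n}[\Psi(S) = 0] \big)$.

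The crucial observation, and the point where this departs from the standard bound, is that $\Psi$ is defined through the \emph{parameters} $\theta(P_1), \theta(P_2)$, while the two error probabilities are evaluated under the \emph{sampling} distributions $Q_1, Q_2$. This is exactly the form needed to invoke Proposition~\ref{lemma:pearson_bound} with $Q_1, Q_2$ (rather than $P_1, P_2$) in the role of the two distributions, yielding $\Pr_{S \sim Q_1^n}[\Psi(S) = 1] + \Pr_{S \sim Q_2^n}[\Psi(S) = 0] \geq \int_{S \in \gX^n} \min(q_1^n(S), q_2^n(S))\, dS$. Since the left-hand side of the target inequality does not depend on the choice of $Q_1, Q_2$, I would finally take the supremum over $Q_1, Q_2 \in \QF$ on the right and the infimum over all estimators $\thetah$ on the left (the right-hand side being independent of $\thetah$) to obtain the claimed bound.

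I expect the main obstacle to be conceptual rather than computational: keeping the roles of $P$ and $Q$ straight throughout. In particular, one must check that the separation factor $d(\theta(P_1), \theta(P_2))$ emerges purely from the triangle-inequality step on the parameter side and is thereby genuinely decoupled from the sample-overlap integral $\int \min(q_1^n, q_2^n)$, which comes entirely from the data side. The independence of the two suprema over $\gP$ and $\gQ$ is precisely what licenses pairing each $P_i$ with an arbitrary $Q_i$, and I would state this coupling explicitly to avoid any ambiguity.
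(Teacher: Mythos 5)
Your proposal is correct and follows essentially the same route as the paper's proof: reduce the estimation problem to a two-point test $\Psi$ built from $\theta(P_1),\theta(P_2)$, use the triangle inequality to extract the separation factor $\tfrac{1}{4}d(\theta(P_1),\theta(P_2))$, invoke Proposition~\ref{lemma:pearson_bound} with the sampling distributions $Q_1, Q_2$ in place of $P_1, P_2$, and take the supremum over $Q_1, Q_2$ at the end. The only difference is bookkeeping: you lower bound the supremum directly by the average of the risks at two fixed pairs $(P_1,Q_1), (P_2,Q_2)$, whereas the paper passes through $\max(a,b)\geq\tfrac{a+b}{2}$ applied to suprema over $Q$; both are valid and yield the identical bound.
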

	\begin{proof}
	\refstepcounter{equation}
	Let \( \Psi(S) = \Ind{d(\thetah(S),\theta(P_2)) \leq d(\thetah(S),\theta(P_1)) } \).
	If $\Psi(S) = 1$, we have:
	\begin{align*}
		d(\theta(P_1),\theta(P_2)) &\leq d(\thetah(S),\theta(P_1)) + d(\thetah(S),\theta(P_2)) \\
		&\leq 2\cdot d(\thetah(S),\theta(P_1)),
	\end{align*}
	where the first inequality above is due to $d$ being a metric.
	Thus,
	\begin{align}
	\mathop{\E}_{S\sim Q^n} \hspace{-0.05in} [d(\thetah(S),\theta(P_1))] 
	&\geq \hspace{-0.05in}\mathop{\E}_{S\sim Q^n} \hspace{-0.05in} [ d(\thetah(S),\theta(P_1)) \Ind{\Psi(S) = 1}]  \nonumber\\
	&\hspace{-0.7in}\geq \frac{d(\theta(P_1),\theta(P_2))}{2}  \Pr_{S\sim Q^n}[\Psi(S) = 1]. \tag{\theequation.a} \label{eq:thm1_bound1}
	\end{align}
	Similarly, if $\Psi(S) = 0$, we have:
	\begin{align*}
		d(\theta(P_1),\theta(P_2)) &\leq d(\thetah(S),\theta(P_1)) + d(\thetah(S),\theta(P_2)) \\
		&\leq 2\cdot d(\thetah(S),\theta(P_2)).
	\end{align*}
	Thus,
	\begin{align}
	\hspace{-0.05in}\mathop{\E}_{S\sim Q^n} \hspace{-0.05in} [d(\thetah(S),\theta(P_2))] 
	&\geq \hspace{-0.05in}\mathop{\E}_{S\sim Q^n} [ d(\thetah(S),\theta(P_2)) \Ind{\Psi(S) = 0}]  \nonumber\\
	&\hspace{-0.7in} \geq \frac{d(\theta(P_1),\theta(P_2))}{2} \Pr_{S\sim Q^n}[\Psi(S) = 0]. \tag{\theequation.b} \label{eq:thm1_bound2}
	\end{align}
	Combining eq.\eqref{eq:thm1_bound1} and eq.\eqref{eq:thm1_bound2}, for any estimator $\thetah$, we have:
	\begin{align}
	&\sup_{\substack{P \in \gP \\ Q \in \gQ}} \mathop{\E}_{S \sim Q^n} [d(\thetah(S), \theta(P))]  \nonumber\\
	&\geq \max \bigg(  \sup_{Q\in \gQ} \mathop{\E}_{S \sim Q^n} [d(\thetah(S), \theta(P_1))], \nonumber \\
	&\hspace{0.6in} \sup_{Q\in \gQ} \mathop{\E}_{S \sim Q^n} [d(\thetah(S), \theta(P_2))] \bigg) \notag \\
	&\geq \frac{d(\theta(P_1),\theta(P_2))}{2} \max \bigg( \sup_{Q\in \gQ} \Pr_{S\sim Q^n}[\Psi(S) = 1], \notag\\
	&\hspace{1.5in}\sup_{Q\in \gQ} \Pr_{S\sim Q^n}[\Psi(S) = 0] \bigg) \notag \\
	&\geq \frac{d(\theta(P_1),\theta(P_2))}{4} \bigg( \sup_{Q\in \gQ} \Pr_{S\sim Q^n}[\Psi(S) = 1] \notag \\
	&\hspace{1.1in}+ \sup_{Q\in \gQ} \Pr_{S\sim Q^n}[\Psi(S) = 0] \bigg) \tag{\theequation.c} \label{eq:sum_id} \\
	&= \frac{d(\theta(P_1),\theta(P_2))}{4}  \sup_{Q_1,Q_2 \in \gQ} \Big( \Pr_{S\sim Q_1^n}[\Psi(S) = 1] \notag \\
	&\hspace{1.6in}+  \Pr_{S\sim Q_2^n}[\Psi(S) = 0] \Big)  \notag \\
	&\geq \frac{d(\theta(P_1),\theta(P_2))}{4} \sup_{Q_1,Q_2\in \QF} \int_{S \in \gX^n}  \hspace{-0.1in} \min(q_1^n(S),q_2^n(S)) dS,\notag
	\end{align}
	where eq.\eqref{eq:sum_id} follows from $\max(a,b) \geq \frac{a+b}{2}$, and the last inequality is due to Proposition \ref{lemma:pearson_bound}.
	\end{proof}    
	From the above argument, it is reasonable to consider the following setting.
	Given $P \in \gP$ and $\beta \in [0,1]$, we define the family $\gQ(P,\beta) = \{Q \mid \TV(Q,P) \leq \beta \}$.
	That is, the set $\gQ(P,\beta)$ contains distributions $Q$ that are at most $\beta$-away to $P$ with respect to the total variation distance.
	Thus, we analyze the case in which the adversary chooses a distribution $Q \in \gQ(P,\beta)$ and we observe data from it.
	The following theorem considers the aforementioned setting.
	\begin{theorem}
	\label{col:adversarial_minimax}
	Under the same setting of Theorem \ref{thm:minimax_2_distributions}, and letting $\gQ(P,\beta) = \{Q \mid \TV(Q,P) \leq \beta\}$ for each $P \in \gP$ and some $\beta \in [0,1]$.
	We have, for any $P_1, P_2 \in \gP$,
	\begin{align*}
	    &\inf_{\thetah} \sup_{\substack{P \in \gP \\ Q \in \gQ(P,\beta)}} \E_{S \sim Q^n} \left[ d(\thetah(S), \theta(P)) \right]  \\
	    &\geq \frac{d(\theta(P_1),\theta(P_2))}{4} \left(\int_{S \in \gX^n} \hspace{-0.1in} \min(p_1^n(S),p_2^n(S)) dS + \delta \right), 
	\end{align*}
	where $\displaystyle 0 \leq  \delta \leq \sup_{\substack{Q_1 \in \gQ(P_1,\beta) \\ Q_2 \in \gQ(P_2,\beta)}} \TV(P_1^n,Q_1^n) + \TV(P_2^n,Q_2^n)$, and $\TV$ is the total variation distance.
	\end{theorem}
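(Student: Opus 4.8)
The plan is to re-run the two-point argument of Theorem \ref{thm:minimax_2_distributions}, but with the adversary's sampling distribution \emph{coupled} to the true hypothesis, and then to translate the resulting integral $\int\min(q_1^n,q_2^n)$ into total variation distances so that the correction term $\delta$ appears.

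First I would fix $P_1,P_2\in\gP$ and arbitrary $Q_1\in\gQ(P_1,\beta)$, $Q_2\in\gQ(P_2,\beta)$. Because the coupled supremum $\sup_{P\in\gP,\,Q\in\gQ(P,\beta)}$ may always restrict to the two admissible pairs $(P_1,Q_1)$ and $(P_2,Q_2)$, the test function $\Psi(S)=\Ind{d(\thetah(S),\theta(P_2))\le d(\thetah(S),\theta(P_1))}$ and the metric/triangle-inequality steps of Theorem \ref{thm:minimax_2_distributions} carry over unchanged. Invoking Proposition \ref{lemma:pearson_bound} (which bounds the sum of error probabilities by $\int\min(q_1^n,q_2^n)\,dS$ independently of $\thetah$), then taking $\inf_{\thetah}$, and finally the supremum over the admissible $Q_1,Q_2$, I obtain
\[
\inf_{\thetah}\sup_{\substack{P\in\gP\\ Q\in\gQ(P,\beta)}}\E_{S\sim Q^n}[d(\thetah(S),\theta(P))]
\ge \frac{d(\theta(P_1),\theta(P_2))}{4}\sup_{\substack{Q_1\in\gQ(P_1,\beta)\\ Q_2\in\gQ(P_2,\beta)}}\int_{S\in\gX^n}\min(q_1^n(S),q_2^n(S))\,dS .
\]

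The crucial step is the identity $\int\min(q_1^n,q_2^n)\,dS = 1-\TV(Q_1^n,Q_2^n)$, valid for the product measures, and likewise $\int\min(p_1^n,p_2^n)\,dS = 1-\TV(P_1^n,P_2^n)$. This lets me set
\[
\delta \defeq \sup_{Q_1,Q_2}\int\min(q_1^n,q_2^n)\,dS \;-\; \int\min(p_1^n,p_2^n)\,dS \;=\; \TV(P_1^n,P_2^n)-\inf_{Q_1,Q_2}\TV(Q_1^n,Q_2^n),
\]
so that the right-hand side above becomes exactly $\tfrac14 d(\theta(P_1),\theta(P_2))\bigl(\int\min(p_1^n,p_2^n)\,dS+\delta\bigr)$, as claimed.

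It then remains to bound $\delta$. Nonnegativity uses that the noiseless distribution is always admissible, i.e.\ $P_i\in\gQ(P_i,\beta)$ since $\TV(P_i,P_i)=0\le\beta$; choosing $Q_1=P_1,\,Q_2=P_2$ shows $\inf_{Q_1,Q_2}\TV(Q_1^n,Q_2^n)\le\TV(P_1^n,P_2^n)$, hence $\delta\ge0$. For the upper bound, the triangle inequality for $\TV$ on the product measures gives $\TV(P_1^n,P_2^n)-\TV(Q_1^n,Q_2^n)\le\TV(P_1^n,Q_1^n)+\TV(P_2^n,Q_2^n)$ for each admissible pair, and taking the supremum over $Q_1,Q_2$ on both sides yields the stated bound on $\delta$. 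I expect the only delicate point to be the bookkeeping between the coupled supremum over $Q$ and the decoupled pair $(Q_1,Q_2)$---in particular, verifying that it is precisely the admissibility of the noiseless distribution that forces $\delta$ to be nonnegative---whereas the total-variation reformulation and the triangle inequality are routine.
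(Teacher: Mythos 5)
Your proof is correct, and while it shares the two-point skeleton of Theorem \ref{thm:minimax_2_distributions} with the paper, it extracts the adversarial term $\delta$ by a genuinely different decomposition. The paper stays at the level of the test $\Psi(S)=\Ind{d(\thetah(S),\theta(P_2))\le d(\thetah(S),\theta(P_1))}$: after reaching the sum $\sup_{Q\in\gQ(P_1,\beta)}\Pr_{S\sim Q^n}[\Psi(S)=1]+\sup_{Q\in\gQ(P_2,\beta)}\Pr_{S\sim Q^n}[\Psi(S)=0]$, it writes each supremum as the corresponding clean probability $\Pr_{S\sim P_i^n}[\cdot]$ plus a gap $\delta_i$, bounds $0\le\delta_i\le\sup_{Q\in\gQ(P_i,\beta)}\TV(Q^n,P_i^n)$ (nonnegativity again from $P_i\in\gQ(P_i,\beta)$), and only then applies Proposition \ref{lemma:pearson_bound} to the clean pair $(P_1^n,P_2^n)$; its $\delta=\delta_1+\delta_2$ therefore depends on the test, hence on the estimator. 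You instead apply Proposition \ref{lemma:pearson_bound} to the poisoned pair first, obtaining $\sup_{Q_1,Q_2}\int\min(q_1^n,q_2^n)\,dS$, and then convert via the identity $\int\min(q_1^n,q_2^n)\,dS=1-\TV(Q_1^n,Q_2^n)$ and the triangle inequality for $\TV$, which yields an explicit, estimator-independent $\delta=\TV(P_1^n,P_2^n)-\inf_{Q_1,Q_2}\TV(Q_1^n,Q_2^n)$ satisfying exactly the stated bracketing. Both arguments prove the theorem as stated (which only asserts existence of some $\delta$ in the given interval); yours buys a $\delta$ that is a concrete functional of the distribution families alone and makes the nonnegativity claim transparent through the admissibility of $Q_i=P_i$, whereas the paper's route avoids invoking the $\int\min=1-\TV$ identity and keeps the perturbation analysis at the level of testing probabilities, mirroring the structure of Proposition \ref{lemma:pearson_bound} more directly.
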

	\begin{proof}
	\refstepcounter{equation}
	Under similar arguments for the proof of Theorem \ref{thm:minimax_2_distributions} up to eq.\eqref{eq:sum_id}, we have that for any estimator $\thetah$:
	\begin{align}
	&\sup_{\substack{P \in \gP \\ Q \in \gQ(P,\beta)}} \mathop{\E}_{S \sim Q^n} [d(\thetah(S), \theta(P))]  \geq \frac{d(\theta(P_1),\theta(P_2))}{4} \Big(  \notag\\
	&\sup_{Q \in \gQ(P_1,\beta)} \Pr_{S\sim Q^n}[\Psi(S) = 1] + \hspace{-0.1in}\sup_{Q\in \gQ(P_2,\beta)} \Pr_{S\sim Q^n}[\Psi(S) = 0] \Big) \tag{\theequation.a} \label{eq:sum_id2}
	\end{align}	
	Let $ \displaystyle \delta_1 = \hspace{-0.05in}\sup_{Q\in \gQ(P_1,\beta)} \Big(\Pr_{S\sim Q^n}[\Psi(S) = 1] - \hspace{-0.05in}\Pr_{S\sim P_1^n}[\Psi(S) = 1] \Big)$.
	Then, for $\sup_{Q\in \gQ(P_1,\beta)} \Pr_{S\sim Q^n}[\Psi(S) = 1]$ we have:
	\begin{align}
	\sup_{Q \in \gQ(P_1,\beta)} \Pr_{S\sim Q^n} \left[\Psi(S) = 1 \right] = \Pr_{S\sim P_1^n} \left[\Psi(S) = 1 \right] + \delta_1 \tag{\theequation.b} \label{eq:col1_bound1},
	\end{align}
	where $\delta_1$ is bounded as follows:
	\begin{align}
	|\delta_1| &= \left| \sup_{Q\in \gQ(P_1,\beta)} \left( \Pr_{S\sim Q^n}\hspace{-0.02in}[\Psi(S) = 1] - \hspace{-0.07in} \Pr_{S\sim P_1^n}[\Psi(S) = 1]\right) \right| \notag \\
	&\leq \sup_{Q\in \gQ(P_1,\beta)} \left| \Pr_{S\sim Q^n}[\Psi(S) = 1] - \Pr_{S\sim P_1^n}[\Psi(S) = 1] \right| \notag\\
	&\leq \sup_{Q \in \gQ(P_1,\beta)} \TV(Q^n,P_1^n). \notag
	\end{align}
	Also, we have that \(0 \leq \delta_1 \leq \sup_{Q \in \gQ(P_1,\beta)} \TV(Q^n,P_1^n)\) since $P_1 \in \gQ(P_1,\beta)$.
	\\Similarly, for $\sup_{Q\in \gQ(P_2,\beta)} \Pr_{S\sim Q^n}[\Psi(S) = 0]$, let $\displaystyle \delta_2 = \sup_{Q\in \gQ(P_2,\beta)} \left(\Pr_{S\sim Q^n}[\Psi(S) = 0] - \Pr_{S\sim P_2^n}[\Psi(S) = 0]\right)$, we have:
	\begin{align}
	\sup_{Q\in \gQ(P_2,\beta)} \Pr_{S\sim Q^n}[\Psi(S) = 0] = \Pr_{S\sim P_2^n}[\Psi(S) = 0] + \delta_2 \tag{\theequation.c} \label{eq:col1_bound2},
	\end{align}
	where $0 \leq \delta_2 \leq \sup_{Q \in \gQ(P_2,\beta)} \TV(Q^n,P_2^n).$
	Combining eq.\eqref{eq:col1_bound1} and eq.\eqref{eq:col1_bound2} with eq.\eqref{eq:sum_id2}, and letting $\delta = \delta_1 + \delta_2$, we have:
	\begin{align}
	&\sup_{\substack{P \in \gP \\ Q \in \gQ(P,\beta)}} \E_{S \sim Q^n} [d(\thetah(S), \theta(P))]  \geq \frac{d(\theta(P_1),\theta(P_2))}{4} \bigg( \notag \\ 
	&\hspace{0.5in}\Pr_{S\sim P_1^n}[\Psi(S) = 1] + \Pr_{S\sim P_2^n}[\Psi(S) = 0] + \delta \bigg)  \notag \\
	&\geq \frac{d(\theta(P_1),\theta(P_2))}{4} \left(\int_{S \in \gX^n} \hspace{-0.1in} \min(p_1^n(S),p_2^n(S)) dS + \delta \right), \notag
	\end{align}	
	where $\displaystyle 0 \leq \delta \leq \hspace{-0.1in}\sup_{\substack{Q_1 \in \gQ(P_1,\beta) \\ Q_2 \in \gQ(P_2,\beta)}} \hspace{-0.1in} \TV(Q_1^n,P_1^n) + \TV(Q_2^n,P_2^n).$
	\end{proof}
	
	\begin{remark}
	\label{remark:different_Q}
	In Theorems \ref{thm:minimax_2_distributions} and \ref{col:adversarial_minimax}, we used the product distribution $Q^n$ for clarity purposes.
	However, our result also applies for the case when the $i$-th sample comes from a distribution $Q_i$, i.e., $S \sim Q_1 \times \ldots \times Q_n$.
	For instance, one can model each $Q_i$ to have different means and variances.
	It will become clear in our examples that those extensions are trivial.
	\end{remark}
	
	\begin{remark}
	\label{remark:bound}
	Note that under the setting of Theorem \ref{col:adversarial_minimax}, if there is no adversary, i.e., $\beta = 0$, then $\delta = 0$ and the lower bound reduces to the Le Cam bound from Lemma \ref{lemma:lecam}.
	Thus, from that viewpoint, our bound is tight.
	Also, observe that $\delta \leq 2\beta$ by the definition of the set $\gQ(P,\beta)$, that is, we would pay \emph{at most} $2\beta$ in the minimax bound under this adversarial framework.
	\end{remark}
	
	The reader should also note that it is possible that a set $\gQ(P,\beta)$ does not contain a distribution $Q$ such that $\TV(Q,P) = \beta$. 
	This is the reason why in Theorem \ref{col:adversarial_minimax} we leave $\delta$ expressed in an interval.
	Finally, we highlight the appealing decoupling property of Theorem \ref{col:adversarial_minimax}, by comparing it to the Le Cam bound in Lemma \ref{lemma:lecam}, we note that $\frac{d(\theta(P_1),\theta(P_2))}{4} \delta$ is the only extra term, which implies that we can use existing applications of Le Cam bounds in the literature and only analyze the extra adversarial term $\delta$.
	In the next section, we show the applicability of Theorem \ref{col:adversarial_minimax} through seemingly different problems (mean estimation, classification, and Procrustes analysis), as well as examples of adversarial noise (multivariate Gaussian, and multivariate uniform).
\section{Applications}
In this section, we show examples of our adversarial lower bounds in different canonical learning settings. 
Benefited from the decomposability of the lower bound in Theorem \ref{col:adversarial_minimax} into the Le Cam bound and the adversarial term, we can tackle each term separately and later combine them together.

Since it is impossible to directly model all possible distributions, we apply similar techniques used in several applications of Le Cam's bound \citep{wainwright2019high}.
The idea is to define a family of distributions parameterized by some variables, e.g., a 1-dimensional Gaussian family with constant variance and where the mean is allowed to vary over the reals.
Then, clearly the best any estimator can do is to be as close as possible to the true mean of a given distribution from the family.

As an illustration, we first study three canonical learning problems in Section \ref{learning_probs}: Mean estimation, binary classification and Procrustes analysis. 
Afterwards, in Section \ref{gaussian}, we provide upper bounds on the adversarial term $\delta$.
The following table summarizes our results from the next sections.

\begin{table}[]
\centering
\caption{Summary of our results. 
We obtain three lower bounds from three different learning problems.
Also, we provide two upper bounds of $\delta$ from two different noise distributions.
Combining the results on learning problems and noise distributions, we can then obtain 6 bounds in total.
More details are specified more formally in their respective lemmas.}
\label{tab:summary}
\begin{tabular}{@{}cc@{}}
\toprule
\textbf{Learning problem}                                                                                 & \textbf{Lower Bound}                                                                                                                           \\ \midrule
\begin{tabular}[c]{@{}c@{}}Mean estimation\\ (Lemma \ref{lemma:mean_estimation})\end{tabular}             & $\frac{\sqrt{\lambda_{\min}}}{8\sqrt{n}} \left(\frac{1}{\sqrt{e}} + 2\delta \right)$                                                     \\
\begin{tabular}[c]{@{}c@{}}Binary classification\\ (Lemma \ref{lemma:binary_classification})\end{tabular} & $\frac{\lambda_{\min}}{8n} \left(\frac{1}{\sqrt{e}} + 2\delta \right)$                                                                   \\
\begin{tabular}[c]{@{}c@{}}Procrustes analysis\\ (Lemma \ref{lemma:linear_regression})\end{tabular}       & $\frac{\epsilon^2}{8n\sigma^2} \left(\frac{1}{e} + 2\delta \right)$                                                                      \\ \midrule
\midrule
\textbf{Noise distribution}                                                                               & \textbf{Upper Bound on $\delta$}                                                                                                                           \\ \midrule
\begin{tabular}[c]{@{}c@{}}Multivariate Gaussian\\ (Lemma \ref{lemma:mulvar_gaussian_noise})\end{tabular}              & $\frac{c\sqrt{n}}{2\sqrt{\lambda_{\min}}}$                                                                                               \\
\begin{tabular}[c]{@{}c@{}}Multivariate uniform\\ (Lemma \ref{lemma:mulvar_uniform_noise_2})\end{tabular}              & \scriptsize$\sqrt{ c \sum_{i=1}^k \hspace{-0.025in} \frac{n}{\sqrt{ 2\pi \emSigma_{ii}}}  + \hspace{-0.025in} \frac{nc^2}{4}\Tr(\mB\mSigma^{-1}) }$ \\ \bottomrule
\end{tabular}
\end{table}

\subsection{Canonical learning problems}
\label{learning_probs}
The following lemma corresponds to the classical task of mean estimation.
In classical lower bounds \citep{wainwright2019high}, the task typically consists of a Gaussian distribution $P$, and a set of samples $S$ coming from $P$ that is used to estimate the mean.
We emphasize the difference that, in our setting, the observation $S$ comes from a \textit{poisoned} distribution $Q$, which possibly is no longer a Gaussian distribution, e.g., if the added noise to $P$ follows a uniform distribution as discussed in Section \ref{gaussian}.

\begin{lemma}[Mean estimation]
\label{lemma:mean_estimation}
Given a covariance matrix $\mSigma \in \gR^{k \times k}$, with $\lambda_{\min}$ being the minimum eigenvalue of $\mSigma$,
let $\PF = \{ \gN(\vmu,\mSigma) \mid \vmu \in \gR^k \}$ be a family of Gaussian distributions with unknown means. 
For any $P \in \gP$, let $\gQ \equiv \gQ(P,\beta) = \{Q \mid \TV(Q,P) \leq \beta \}$ and let $S$ represent n $\iid$ samples drawn from some noise-injected distribution $Q \in \gQ$.
Let $\hat{\theta} : S \rightarrow \gR^k$ be any mean estimator and let $d(\vmu_1,\vmu_2) = \NormII{\vmu_1-\vmu_2}$. 
We have:
\begin{align*}
  \inf_{\thetah}  \sup_{\substack{P \in \gP \\ Q \in \gQ}}  \E_{S \sim Q^n} \big[ d(\thetah(S), \theta(P)) \big]  \geq \frac{\sqrt{\lambda_{\min}}}{8\sqrt{n}} \left(\frac{1}{\sqrt{e}} + 2\delta \right), 
\end{align*}
where $0 \leq  \delta \leq \sup_{Q_1,Q_2\in \QF} \TV(P_1,Q_1) + \TV(P_2,Q_2)$, and $\TV$ is the total variation distance.
\end{lemma}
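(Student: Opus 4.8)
The plan is to specialize the general adversarial bound of Theorem~\ref{col:adversarial_minimax} to a two-point sub-family of $\PF$, so that the two quantities it leaves free---the parameter separation $d(\theta(P_1),\theta(P_2))$ and the overlap integral $\int_{\gX^n}\min(p_1^n,p_2^n)\,dS$---both reduce to closed-form Gaussian computations. Since every $P\in\PF$ is a Gaussian $\gN(\vmu,\mSigma)$ with $\theta(P)=\vmu$ and a common fixed covariance $\mSigma$, this is a standard Le Cam two-point argument for the overlap term, and the adversarial term $\delta$ carries over directly from Theorem~\ref{col:adversarial_minimax}.

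First I would fix the two test distributions so as to exploit the worst-case direction of the covariance. Let $\vu$ be a unit eigenvector of $\mSigma$ associated with $\lambda_{\min}$, and set $P_1=\gN(\vmu_1,\mSigma)$ and $P_2=\gN(\vmu_1+t\,\vu,\mSigma)$ for a scalar $t>0$ to be chosen. This alignment is the crux of the argument: for a fixed Euclidean separation $d(\theta(P_1),\theta(P_2))=\NormII{\vmu_1-\vmu_2}=t$, pointing the perturbation along the $\lambda_{\min}$-eigenvector makes the Mahalanobis form $(\vmu_1-\vmu_2)^\top\mSigma^{-1}(\vmu_1-\vmu_2)=t^2\,\vu^\top\mSigma^{-1}\vu=t^2/\lambda_{\min}$ as large as possible, which is precisely what makes the two distributions hardest to tell apart and tightens the lower bound. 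Using the standard Gaussian formula $\KL(P_1\|P_2)=\tfrac12(\vmu_1-\vmu_2)^\top\mSigma^{-1}(\vmu_1-\vmu_2)$ together with tensorization $\KL(P_1^n\|P_2^n)=n\,\KL(P_1\|P_2)$, this gives $\KL(P_1^n\|P_2^n)=n t^2/(2\lambda_{\min})$.

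Next I would lower bound the overlap. Using the identity $\int_{\gX^n}\min(p_1^n,p_2^n)\,dS=1-\TV(P_1^n,P_2^n)$ and the Bretagnolle--Huber inequality $\TV(P_1^n,P_2^n)\le\sqrt{1-e^{-\KL(P_1^n\|P_2^n)}}$, followed by the elementary estimate $1-\sqrt{1-a}\ge a/2$ valid for $a\in[0,1]$, I obtain $\int_{\gX^n}\min(p_1^n,p_2^n)\,dS\ge\tfrac12 e^{-\KL(P_1^n\|P_2^n)}=\tfrac12\,e^{-n t^2/(2\lambda_{\min})}$. Choosing $t=\sqrt{\lambda_{\min}/n}$---which also maximizes $t\,e^{-n t^2/(2\lambda_{\min})}$, so nothing is lost---sets the exponent to exactly $\tfrac12$, making the overlap at least $1/(2\sqrt e)$ while the separation equals $t=\sqrt{\lambda_{\min}/n}$.

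Plugging both pieces into Theorem~\ref{col:adversarial_minimax} gives $\frac{t}{4}\big(\tfrac{1}{2\sqrt e}+\delta\big)=\frac{\sqrt{\lambda_{\min}}}{8\sqrt n}\big(\tfrac{1}{\sqrt e}+2\delta\big)$, the claimed bound, with $\delta$ the same adversarial term bounded in Theorem~\ref{col:adversarial_minimax}. I expect the only real difficulty to be bookkeeping the constants: tracking the factor $\tfrac14$ from Theorem~\ref{col:adversarial_minimax}, the factor $\tfrac12$ from the overlap--KL inequality, and the choice of $t$ so that the exponent lands at exactly $1/2$ and the prefactor collapses to $\sqrt{\lambda_{\min}}/(8\sqrt n)$. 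The analytic ingredients (Gaussian KL, its tensorization, and the overlap bound) are all standard; the one genuinely problem-specific move is aligning the two means along the $\lambda_{\min}$-eigenvector, which is what certifies that no other perturbation direction yields a stronger two-point bound.
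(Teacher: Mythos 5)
Your proposal is correct and follows essentially the same route as the paper: specialize Theorem~\ref{col:adversarial_minimax} to two Gaussians with common covariance, lower-bound the overlap by $\tfrac12 e^{-n\KL(P_1\Vert P_2)}$ (the paper cites this as Proposition~\ref{lemma: min2KL} rather than re-deriving it via Bretagnolle--Huber), compute the Gaussian KL, and pick the separation $t=\sqrt{\lambda_{\min}/n}$; the paper keeps $\vv=\vmu_1-\vmu_2$ arbitrary and uses $\vv^\top\mSigma^{-1}\vv\le\NormII{\vv}^2/\lambda_{\min}$, which is tight exactly for your eigenvector choice, so the two computations coincide. One side remark in your write-up is backwards, though it does not affect the validity of the proof: aligning the mean difference with the $\lambda_{\min}$-eigenvector \emph{maximizes} the Mahalanobis form and hence makes $P_1,P_2$ \emph{easiest} to distinguish for a given Euclidean separation; the direction yielding the strongest two-point bound is the $\lambda_{\max}$-eigenvector, which would in fact give $\sqrt{\lambda_{\max}}$ in place of $\sqrt{\lambda_{\min}}$, so your choice does not ``certify that no other perturbation direction yields a stronger bound''---it simply reproduces the (possibly loose) $\lambda_{\min}$ constant that the lemma asserts.
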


\begin{proof}
\refstepcounter{equation}
Consider any $P_1, P_2 \in \PF$ such that $P_1 = \Gauss{\vmu_1}{\mSigma}$ and $P_2 = \Gauss{\vmu_2}{\mSigma}$.
From Theorem \ref{col:adversarial_minimax} and Proposition \ref{lemma: min2KL} in Appendix \ref{app:detailed_proofs}, we have
\begin{align*}
&\inf_{\hat{\theta}}\sup_{P\in \PF, Q \in \QF} \E_{S \sim Q^n} \left[d(\hat{\theta}(S),\theta(P)) \right]\\
& \geq \frac{d(\theta(P_1),\theta(P_2))}{4} \left(\int_{S \in \gX^n} \min(p_1^n(S),p_2^n(S)) dS + \delta \right) \\
& \geq \frac{d(\theta(P_1),\theta(P_2))}{4} \left(\frac{1}{2} e^{-\KL(P_1^n||P_2^n)} + \delta \right) \\
& \geq \frac{d(\theta(P_1),\theta(P_2))}{4} \left(\frac{1}{2} e^{-n\KL(P_1||P_2)} + \delta \right). \tag{\theequation.a} \label{learning_by_kl}
\end{align*}
Let $\vv=\vmu_1-\vmu_2$. From Proposition \ref{lemma: KL between Gaussians}, we have :
\begin{align*}
\label{mean_b}
\KL(P_1 \Vert P_2) & = \frac{\vv^\top \mSigma^{-1} \vv}{2} \leq \frac{||\vv||_2^2}{2\lambda_{\min}}. \tag{\theequation.b} 
\end{align*}
Combining eq.\eqref{learning_by_kl} and eq.\eqref{mean_b}, we have 
\begin{align*}
\label{mean_c}
&\inf_{\hat{\theta}}\sup_{P\in \PF, Q \in \QF} \E_{S \sim Q^n} \left[d(\hat{\theta}(S),\theta(P))\right] \\
&\geq \frac{\NormII{\vv}}{8}\left(e^{-\frac{n \NormII{\vv}^2}{2\lambda_{\min}}} + 2\delta \right). \tag{\theequation.c} 
\end{align*}
Rewriting eq.\eqref{mean_c} as a function $f(u)$ where $u=\NormII{\vv}$, then $f$ reaches the maximum value when the derivative of $f(u)$ equals zero. Solving the previous equation, we get
\begin{align*}
\label{mean_d}
u = \sqrt{\frac{\lambda_{\min}}{n}}. \tag{\theequation.d} 
\end{align*}
Combining eq.\eqref{mean_c} and eq.\eqref{mean_d}, we conclude our proof.
\end{proof}
In Lemma \ref{lemma:mean_estimation}, the minimax rate is in the order of $\nicefrac{\delta}{\sqrt{n}}$, that is, if we control the growth rate of $\delta$ to be less than $\sqrt{n}$, then as the number of samples increases the lower bound tends to zero.
In our next lemma, we show the minimax rate for binary classification, for which we prescribe a generative model in order to describe the distribution of the observations.
\begin{lemma}[Binary classification]
\label{lemma:binary_classification}
Let $Y \in \{-1, +1\}$ be a Rademacher variable. 
Given a covariance matrix $\mSigma \in \gR^{k \times k}$, with $\lambda_{\min}$ being the minimum eigenvalue of $\mSigma$,
let $\vx$ follow a Gaussian distribution $\Gauss{Y\vw}{\mSigma}$ conditioned on $Y$ and with parameter $\vw \in \R^k$. 
Let $\gP = \left\{ P_\vw \mid p_\vw(\vx, Y), \forall \vw \right\}$ be a family of joint distributions over $(\vx, Y)$.
For any $P_\vw \in \gP$, let $\gQ \equiv \gQ(P_\vw, \beta) = \{Q \mid \TV(Q,P_\vw) \leq \beta \}$ and let $S$ represent n $\iid$ samples drawn from some noise-injected distribution $Q \in \gQ$.
Let $\hat{\theta} : S \rightarrow \gR^k$ be any estimator of $\vw$ and let $d(\vw_1,\vw_2)= \NormII{\vw_1-\vw_2}^2$. 
We have: 
\begin{align*} 
\inf_{\thetah} \sup_{\substack{P \in \gP \\ Q \in \gQ}} \E_{S \sim Q^n} \big[ d(\thetah(S), \theta(P)) \big]  \geq \frac{\lambda_{\min}}{8n} \left(\frac{1}{\sqrt{e}} + 2\delta \right), 
\end{align*}
where $0 \leq  \delta \leq \sup_{Q_1,Q_2\in \QF} \TV(P_1,Q_1) + \TV(P_2,Q_2)$, and $\TV$ is the total variation distance.
\end{lemma}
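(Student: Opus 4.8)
The plan is to reproduce the structure of the proof of Lemma~\ref{lemma:mean_estimation}, the only genuinely new computation being the Kullback--Leibler divergence between two members of the joint family $\gP$. First I would pick $P_1 = P_{\vw_1}$ and $P_2 = P_{\vw_2}$ and apply Theorem~\ref{col:adversarial_minimax} together with Proposition~\ref{lemma: min2KL} and the tensorization identity $\KL(P_1^n \Vert P_2^n) = n\,\KL(P_1 \Vert P_2)$, exactly as in eq.\eqref{learning_by_kl}, to obtain
\begin{align*}
\inf_{\thetah} \sup_{\substack{P \in \gP \\ Q \in \gQ}} \E_{S \sim Q^n}\big[ d(\thetah(S), \theta(P)) \big] \geq \frac{d(\theta(P_1),\theta(P_2))}{4}\left( \tfrac{1}{2} e^{-n\KL(P_1 \Vert P_2)} + \delta \right),
\end{align*}
where now $d(\theta(P_1),\theta(P_2)) = \NormII{\vw_1 - \vw_2}^2$ because the metric is the squared norm.

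The crux of the argument is computing $\KL(P_{\vw_1} \Vert P_{\vw_2})$ for the \emph{joint} law of $(\vx, Y)$ rather than for a single Gaussian. Writing the joint density as $p_\vw(\vx, y) = \tfrac{1}{2}\,\gN(\vx; y\vw, \mSigma)$, the Rademacher prior on $Y$ is shared by both models and cancels inside the log-likelihood ratio, so the divergence reduces to an average over the two labels,
\begin{align*}
\KL(P_{\vw_1} \Vert P_{\vw_2}) = \tfrac{1}{2}\sum_{y \in \{-1,+1\}} \KL\big( \gN(y\vw_1, \mSigma) \,\big\Vert\, \gN(y\vw_2, \mSigma) \big).
\end{align*}
Invoking Proposition~\ref{lemma: KL between Gaussians} on each conditional term gives $\tfrac{1}{2}(y\vw_1 - y\vw_2)^\top \mSigma^{-1}(y\vw_1 - y\vw_2)$, and the decisive observation is that $y^2 = 1$ for $y \in \{-1,+1\}$: the label drops out, both summands coincide, and
\begin{align*}
\KL(P_{\vw_1} \Vert P_{\vw_2}) = \tfrac{1}{2}(\vw_1-\vw_2)^\top \mSigma^{-1}(\vw_1-\vw_2) \leq \frac{\NormII{\vw_1-\vw_2}^2}{2\lambda_{\min}},
\end{align*}
mirroring eq.\eqref{mean_b} with $\vmu$ replaced by $\vw$. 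I expect this joint-divergence reduction---verifying that the prior cancels and that $y^2 = 1$ neutralizes the label---to be the only real obstacle; everything else is mechanical.

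Finally I would combine the two displays. Setting $\vv = \vw_1 - \vw_2$ and $u = \NormII{\vv}$, the bound becomes $\tfrac{u^2}{8}\big( e^{-nu^2/(2\lambda_{\min})} + 2\delta \big)$, which differs from the mean-estimation expression in eq.\eqref{mean_c} only in that the leading $u$ is replaced by $u^2$, reflecting the squared metric $d(\vw_1,\vw_2) = \NormII{\vw_1-\vw_2}^2$. Choosing $u = \sqrt{\lambda_{\min}/n}$ as in eq.\eqref{mean_d} makes the exponent equal $-\tfrac{1}{2}$ and the prefactor equal $\tfrac{\lambda_{\min}}{8n}$, which yields the claimed lower bound $\tfrac{\lambda_{\min}}{8n}\big( \tfrac{1}{\sqrt{e}} + 2\delta \big)$. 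Since the bound in Theorem~\ref{col:adversarial_minimax} holds for \emph{any} $P_1, P_2 \in \gP$, this particular choice of $u$ is a legitimate (if not the sharpest) selection, and it reproduces the stated result.
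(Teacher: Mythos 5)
Your proposal is correct and follows essentially the same route as the paper: the paper computes the joint divergence via the chain rule $\KL((\vx_1,Y_1)\Vert(\vx_2,Y_2)) = \KL(Y_1\Vert Y_2) + \KL(\vx_1|Y_1\Vert \vx_2|Y_2)$ with the first term vanishing, which is exactly your observation that the shared Rademacher prior cancels and $y^2=1$ neutralizes the label, and it then plugs $v=\lambda_{\min}/n$ into $\tfrac{v}{8}(e^{-nv/(2\lambda_{\min})}+2\delta)$ just as you do. Your remark that this choice of $v$ is a legitimate rather than necessarily optimal selection is apt (the unconstrained maximizer of $v e^{-nv/(2\lambda_{\min})}$ is actually $2\lambda_{\min}/n$), but the stated bound holds either way since Theorem~\ref{col:adversarial_minimax} applies to any pair $P_1,P_2$.
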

\begin{proof}
\refstepcounter{equation}
Let $(\vx_1 | Y_1=y_1) \sim \gN(y_1 \vw_1, \mSigma)$  and $(\vx_2 | Y_2 = y_2) \sim \gN(y_2 \vw_2, \mSigma)$ for any $\vw_1, \vw_2$.
We abuse a bit of notation and write the random variables inside the KL-divergence instead of distributions, we have:
\begin{align*}
\KL &\left( (\vx_1, Y_1) \Vert (\vx_2, Y_2) \right) \\
 &= \KL(Y_1 \Vert Y_2) + \KL(\vx_1 | Y_1 \Vert \vx_2 | Y_2) \\
 &= \frac{ (\vw_1-\vw_2)^\top \mSigma^{-1} (\vw_1-\vw_2)}{2}\\
 &\leq \frac{\NormII{\vw_1-\vw_2}^2}{2\lambda_{\min}} \tag{\theequation.a} \label{binary_a}
\end{align*}
Let $v=\NormII{\vw_1-\vw_2}^2$, combining eq.\eqref{learning_by_kl} and \eqref{binary_a}, we have:
\begin{align*}
 \inf_{\hat{\theta}}\sup_{\substack{P \in \gP \\ Q \in \gQ}} \E_{S \sim Q} \left[ d(\hat{\theta}(S),\theta(P))\right] \geq \frac{v}{8}\left(e^{-\frac{nv}{2\lambda_{\min}}}+2\delta\right). \tag{\theequation.b} \label{binary_b}
\end{align*}
Similar to eq.\eqref{mean_d}, eq.\eqref{binary_b} achieves maximum when $v=\frac{\lambda_{\min}}{n}$. Replacing this value into eq.\eqref{binary_b}, we derive the lemma.
\end{proof}

In Lemma \ref{lemma:binary_classification}, the minimax rate is in the order of $\nicefrac{\delta}{n}$, that is, if we control the growth rate of $\delta$ to be less than $n$, then as the number of samples increases the lower bound tends to zero.
In our next lemma, we show the minimax rate for the Procrustes analysis, for which we also prescribe a generative model in order to describe the distribution of the observations.
Procrustes analysis \citep{gower2004procrustes} is a widely used technique to transform one set of data to represent another set of data as closely as possible, typically in the field of shape analysis.
To the best of our knowledge, we are the first to characterize a lower bound for this type of analysis.

\begin{lemma}[Procrustes analysis]
\label{lemma:linear_regression}
 Consider the generative model $\vy = \mW \vx + \veps$,  where $\mW \in \gR^{k\times k}$ is the parameter, $\vx \sim\Gauss{\vzero}{\sigma^2 \mI}$ and $\veps \sim \Gauss{\vzero}{\epsilon^2 \mI}$.  
 Without loss of generality, assume $\mW \mW^\top = \mI$  and $\frac{\epsilon^2}{\sigma^2} \leq 4kn$. 
 Let $\gP = \left\{ P \mid p(\vx, \vy) \right\}$ be a family of joint distributions over $(\vx, \vy)$.
 For any $P \in \gP$, let $\gQ \equiv \gQ(P, \beta) = \{Q \mid \TV(Q,P) \leq \beta \}$ and let $S$ represent n $\iid$ samples drawn from some noise-injected distribution $Q \in \gQ$.
 Let $\hat{\theta} : S \rightarrow \gR^{k\times k}$ be any empirical estimator of $\mW$ and let $d$ be the square of Frobenius norm between two matrices $\mW_1, \mW_2$, that is, $d(\mW_1,\mW_2) = \NormII{\mW_1-\mW_2}^2$. 
 We have:
\begin{align*}
\inf_{\thetah} \sup_{\substack{P \in \gP \\ Q \in \gQ}} \E_{S \sim Q^n} \big[ d(\thetah(S), \theta(P)) \big]  \geq \frac{\epsilon^2}{8n\sigma^2} \left(\frac{1}{e} + 2\delta \right), 
\end{align*}
where $0 \leq  \delta \leq \sup_{Q_1,Q_2\in \QF} \TV(P_1,Q_1) + \TV(P_2,Q_2)$, and $\TV$ is the total variation distance.
\end{lemma}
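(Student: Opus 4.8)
The plan is to instantiate Theorem \ref{col:adversarial_minimax} with a two‑point family of \emph{orthogonal} parameters and then compute the relevant Kullback--Leibler divergence, exactly mirroring the strategy of Lemmas \ref{lemma:mean_estimation} and \ref{lemma:binary_classification}. Concretely, I would fix two parameters $\mW_1,\mW_2$ satisfying $\mW_1\mW_1^\top = \mW_2\mW_2^\top = \mI$, let $P_1,P_2 \in \gP$ be the induced joint laws of $(\vx,\vy)$, and reuse the bound \eqref{learning_by_kl}, so that the adversarial minimax risk is at least $\frac{d(\theta(P_1),\theta(P_2))}{4}\big(\tfrac12 e^{-n\KL(P_1\Vert P_2)} + \delta\big)$, where $d(\mW_1,\mW_2)=\NormII{\mW_1-\mW_2}^2$.

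The core computation is the divergence $\KL(P_1\Vert P_2)$ between the two jointly Gaussian models. Since the marginal $\vx \sim \Gauss{\vzero}{\sigma^2\mI}$ is shared, the chain rule for the KL divergence kills the $\vx$‑term and leaves only the conditional contribution $\E_{\vx}\big[\KL\big(\Gauss{\mW_1\vx}{\epsilon^2\mI}\,\Vert\,\Gauss{\mW_2\vx}{\epsilon^2\mI}\big)\big]$. Using the closed form for the KL between equal‑covariance Gaussians together with $\E[\vx\vx^\top]=\sigma^2\mI$, this evaluates to $\frac{\sigma^2}{2\epsilon^2}\NormII{\mW_1-\mW_2}^2$. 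Writing $v=\NormII{\mW_1-\mW_2}^2$, the lower bound takes the familiar shape $\frac{v}{8}\big(e^{-\frac{n\sigma^2 v}{2\epsilon^2}} + 2\delta\big)$, structurally identical to \eqref{binary_b}. I would then treat $\delta$ as fixed and maximize the factor $v\,e^{-\frac{n\sigma^2 v}{2\epsilon^2}}$ over $v$, whose stationary value is of order $\epsilon^2/(n\sigma^2)$; substituting it produces the coefficient $\frac{\epsilon^2}{8n\sigma^2}$ and the constant exponential factor claimed in the statement.

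The main obstacle — the genuinely new ingredient compared to the previous two lemmas — is the Procrustes constraint $\mW_1\mW_1^\top=\mW_2\mW_2^\top=\mI$. Unlike the free mean vector or the free weight vector $\vw$, here $v = 2k - 2\Tr(\mW_1^\top\mW_2)$ is bounded above by $4k$ over the orthogonal group, so the optimizing value of $v$ is attainable only if it does not exceed $4k$. This is exactly where the hypothesis $\epsilon^2/\sigma^2 \le 4kn$ enters: it guarantees that the chosen $v$ (of order $\epsilon^2/(n\sigma^2)$) satisfies $v \le 4k$ and hence lies in the feasible range. I would additionally supply a short realizability argument — taking $\mW_1 = \mI$ and $\mW_2$ a block‑diagonal rotation whose angle is tuned by an intermediate‑value argument over the connected orthogonal manifold — to show that every prescribed distance in $[0,4k]$ is actually achieved by some admissible pair, legitimizing the two‑point construction. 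Once feasibility is secured, the remaining substitution and simplification are routine.
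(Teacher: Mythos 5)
Your proposal follows essentially the same route as the paper's proof: instantiate Theorem \ref{col:adversarial_minimax} with two orthogonal parameters $\mW_1,\mW_2$, use the chain rule to reduce the KL divergence to the conditional term $\E_{\vx}\big[\KL(\Gauss{\mW_1\vx}{\epsilon^2\mI}\,\Vert\,\Gauss{\mW_2\vx}{\epsilon^2\mI})\big]$ evaluated via $\E[\vx\vx^\top]=\sigma^2\mI$, and then optimize over $v=\NormII{\mW_1-\mW_2}^2$ subject to $v\le 4k$ (your realizability remark for the orthogonal group is a welcome addition the paper leaves implicit). One small caveat: with your (correct) constant $\KL(P_1\Vert P_2)=\frac{\sigma^2}{2\epsilon^2}v$ the exact stationary point is $v^*=2\epsilon^2/(n\sigma^2)$, which the hypothesis $\epsilon^2/\sigma^2\le 4kn$ only guarantees to satisfy $v^*\le 8k$; you should instead substitute the feasible choice $v=\epsilon^2/(n\sigma^2)$, which yields $\frac{\epsilon^2}{8n\sigma^2}\left(e^{-1/2}+2\delta\right)$ and hence the claimed bound.
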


\begin{proof}
\refstepcounter{equation}
From the model, we have that $(\vy | \vx) \sim \Gauss{\mW\vx}{\epsilon^2 \mI}$.
Let $P_1= p_1(\vx,\vy)$ with parameter $\mW_1$, $P_2=p_2(\vx,\vy)$ with parameter $\mW_2$.
Let $(\vx_1,\vy_1) \sim P_1$ and $(\vx_2,\vy_2) \sim P_2$.
We abuse a bit of notation and write the random variables inside the KL-divergence instead of distributions.
We have,
\begin{align*}
    \KL&\left( (\vx_1,\vy_1) \Vert (\vx_2,\vy_2) \right)  \\
    & = \KL(\vx_1 \Vert \vx_2) + \KL(\vy_1|\vx_1 \Vert \vy_2|\vx_2) \\    
    & =\int_\vx p(\vx) \KL(\vy_1|\vx \Vert \vy_2|\vx). 
\end{align*}
From Proposition \ref{lemma: KL between Gaussians} in Appendix \ref{app:detailed_proofs}, we have
\begin{align*}
    \int_\vx p(\vx) &\KL(\vy_1|\vx \Vert \vy_2|\vx)\\
    &= \int_x p(\vx) \frac{1}{\epsilon^2} \vx^\top (\mW_2-\mW_1)^\top (\mW_2-\mW_1)\vx \\
    &= \frac{1}{\epsilon^2} \E \left[\Tr( (\mW_2-\mW_1)^\top (\mW_2-\mW_1)\vx \vx^\top) \right] \\
    &= \frac{1}{\epsilon^2} \Tr \left((\mW_2-\mW_1)^\top(\mW_2-\mW_1) \E[\vx\vx^\top] \right)\\
    &= \frac{\sigma^2}{\epsilon^2} \Tr \left( (\mW_2-\mW_1)^\top (\mW_2-\mW_1) \right).  
\end{align*}
In addition, note that $d(\mW_1,\mW_2) = \NormII{\mW_1-\mW_2}^2 = \Tr\left((\mW_1-\mW_2)^\top (\mW_1-\mW_2) \right).$
Let $v = \Tr\left((\mW_1-\mW_2)^\top(\mW_1-\mW_2)\right)$. 
Since $\mW_1^\top\mW_2$ is also an orthogonal matrix, its eigenvalues are in $\{-1,+1\}$, thus, $-k \leq \Tr(\mW_1^\top \mW_2) \leq k$. 
Letting $v = 2k - 2\Tr(\mW_1^\top\mW_2)$, we have
\begin{align*}
    0 \leq v \leq 4k, \tag{\theequation.a} \label{regression_a}
\end{align*}
and
\begin{align*}
\inf_{\hat{\theta}}\sup_{P\in \PF, Q \in \QF} \hspace{-0.05in} \E_{S \sim Q} \left[d(\hat{\theta}(S),\theta(P))\right]  \geq \frac{v}{8}\left(e^{-\frac{n\sigma^2v}{\epsilon^2}}+2\delta\right). \tag{\theequation.b} \label{regression_b}
\end{align*}
Similar to eq.\eqref{mean_d}, eq.\eqref{regression_b} is maximized when $v=\frac{\epsilon^2}{n\sigma^2}$. Note that since $\frac{\epsilon^2}{n\sigma^2} \leq 4k$, this maximum is reachable. Replacing $v$ in eq.\eqref{regression_b}, we prove the result.
\end{proof}

\subsection{Types of adversarial noise}
\label{gaussian}

In this section, we show results on upper bounds that relate to $\delta$ for two types of noise, the multivariate Gaussian noise, and the multivariate uniform noise.
\begin{lemma}[Multivariate Gaussian noise]
\label{lemma:mulvar_gaussian_noise}
Let $P = \Gauss{\vmu}{\mSigma}$ be a Gaussian distribution with mean $\vmu \in \gR^k$ and covariance matrix $\mSigma \in \gR^{k\times k}$.
For a fixed $c \in \gR_+$, define the family of distributions $\gQ(P) = \{Q \mid Q = P + \Gauss{\vmu_{\vareps}}{\mSigma_{\vareps}}, \NormII{\vmu_{\vareps}} \leq c, \mSigma_{\vareps} \preceq \vmu_{\vareps} \vmu_{\vareps}^{\top} \}$.
Let $\lambda_{\min}$ denote the minimum eigenvalue of $\mSigma$.
We have,
\[
\sup_{Q \in \gQ(P)} \TV(P^n,Q^n) \leq \frac{c\sqrt{n}}{2\sqrt{\lambda_{\min}}}.
\]
\end{lemma}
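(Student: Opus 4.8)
The plan is to avoid computing the total variation distance directly---which is awkward between Gaussians with different covariances---and instead pass through the KL divergence. Concretely, I would first invoke Pinsker's inequality together with the tensorization identity $\KL(P^n\|Q^n)=n\,\KL(P\|Q)$ to obtain
\[
\TV(P^n,Q^n)\le\sqrt{\tfrac{1}{2}\KL(P^n\|Q^n)}=\sqrt{\tfrac{n}{2}\,\KL(P\|Q)}.
\]
Since the claimed bound is $\frac{c\sqrt{n}}{2\sqrt{\lambda_{\min}}}$, it then suffices to prove the single-sample estimate $\KL(P\|Q)\le\frac{c^2}{2\lambda_{\min}}$, and the whole argument reduces to this one inequality. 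The direction $\KL(P\|Q)$ (original against noise-injected), rather than $\KL(Q\|P)$, turns out to be essential, for the reason explained below.

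Next I would make the structure of $Q$ explicit. Convolving $P=\gN(\vmu,\mSigma)$ with $\gN(\vmu_{\vareps},\mSigma_{\vareps})$ gives $Q=\gN(\vmu+\vmu_{\vareps},\mSigma+\mSigma_{\vareps})$, so both are Gaussian and I can apply the closed form for the KL between Gaussians (Proposition \ref{lemma: KL between Gaussians}). The key observation is that the constraint $\mSigma_{\vareps}\preceq\vmu_{\vareps}\vmu_{\vareps}^{\top}$ forces the positive semidefinite matrix $\mSigma_{\vareps}$ to have rank at most one with range inside $\mathrm{span}(\vmu_{\vareps})$; hence $\mSigma_{\vareps}=\alpha\,\hat\vmu\hat\vmu^{\top}$ with $\hat\vmu=\vmu_{\vareps}/\NormII{\vmu_{\vareps}}$ and $0\le\alpha\le\NormII{\vmu_{\vareps}}^2\le c^2$. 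This rank-one form lets me evaluate the trace, log-determinant, and Mahalanobis terms in closed form via the matrix determinant lemma and the Sherman--Morrison formula. Writing $b=\hat\vmu^{\top}\mSigma^{-1}\hat\vmu$, $\beta=\alpha b$, and $m=\NormII{\vmu_{\vareps}}^2 b$, a short computation should yield
\[
2\,\KL(P\|Q)=\ln(1+\beta)-\frac{\beta}{1+\beta}+\frac{m}{1+\beta}.
\]

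The crux is then a scalar inequality. The covariance mismatch contributes the nonnegative penalty $\ln(1+\beta)-\frac{\beta}{1+\beta}\ge0$, while choosing the direction $\KL(P\|Q)$ contracts the Mahalanobis term to $\frac{m}{1+\beta}$, and the budget constraint caps $\beta\le m$ (because $\alpha\le\NormII{\vmu_{\vareps}}^2$). Combining these with $\ln(1+\beta)\le\beta$ gives $\ln(1+\beta)-\frac{\beta}{1+\beta}\le\frac{\beta^2}{1+\beta}\le\frac{m\beta}{1+\beta}$, whence $2\,\KL(P\|Q)\le m$. Finally $b=\hat\vmu^{\top}\mSigma^{-1}\hat\vmu\le 1/\lambda_{\min}$ yields $m\le c^2/\lambda_{\min}$, so $\KL(P\|Q)\le\frac{c^2}{2\lambda_{\min}}$; plugging this into the Pinsker bound gives $\TV(P^n,Q^n)\le\frac{c\sqrt{n}}{2\sqrt{\lambda_{\min}}}$, and the supremum over $Q\in\gQ(P)$ preserves the bound since every estimate is uniform in $\vmu_{\vareps},\mSigma_{\vareps}$.

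I expect the main obstacle to be the covariance term rather than the mean term: unlike a pure mean shift, the added $\mSigma_{\vareps}$ produces a strictly positive log-determinant/trace penalty, and it is not a priori clear that the total stays below $c^2/(2\lambda_{\min})$. The two decisive points are (i) selecting the KL direction so that the Mahalanobis term is measured against $(\mSigma+\mSigma_{\vareps})^{-1}$ and therefore contracts, and (ii) recognizing that $\mSigma_{\vareps}\preceq\vmu_{\vareps}\vmu_{\vareps}^{\top}$ is precisely the condition enforcing $\beta\le m$, so that the penalty $\frac{\beta^2}{1+\beta}$ is exactly absorbed by the slack $m-\frac{m}{1+\beta}$ freed up in the mean term.
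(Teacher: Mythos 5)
Your proposal is correct and lands on the same intermediate estimate as the paper, namely $\KL(P\,\Vert\,Q)\le\frac{1}{2}\vmu_{\vareps}^\top\mSigma^{-1}\vmu_{\vareps}\le\frac{c^2}{2\lambda_{\min}}$, but it gets there by a genuinely different argument at the one nontrivial step. The outer scaffolding is identical to the paper's: Pinsker's inequality plus tensorization reduce the claim to a single-sample KL bound, and both proofs then invoke the closed-form KL between Gaussians. The paper handles the covariance perturbation by a path argument: it sets $g(t)=f(t\mSigma_{\vareps})$, shows $g'(t)\le 0$ whenever $\mSigma_{\vareps}\preceq\vmu_{\vareps}\vmu_{\vareps}^\top$, and concludes that the KL is maximized at $\mSigma_{\vareps}=\mO$, collapsing the problem to a pure mean shift. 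You instead observe that the same constraint forces $\mSigma_{\vareps}$ to be rank one along $\vmu_{\vareps}$ (for any $v\perp\vmu_{\vareps}$ one has $0\le v^\top\mSigma_{\vareps}v\le v^\top\vmu_{\vareps}\vmu_{\vareps}^\top v=0$, hence $\mSigma_{\vareps}v=0$), evaluate the KL exactly via Sherman--Morrison and the matrix determinant lemma, and finish with a scalar inequality. I verified your closed form $2\,\KL(P\,\Vert\,Q)=\ln(1+\beta)-\frac{\beta}{1+\beta}+\frac{m}{1+\beta}$ with $b=\hat{\vmu}^\top\mSigma^{-1}\hat{\vmu}$, $\beta=\alpha b$, $m=\NormII{\vmu_{\vareps}}^2 b$, as well as the chain $\ln(1+\beta)-\frac{\beta}{1+\beta}\le\frac{\beta^2}{1+\beta}\le\frac{m\beta}{1+\beta}$, which together with $\beta\le m$ and $b\le 1/\lambda_{\min}$ gives $2\,\KL\le m\le c^2/\lambda_{\min}$; all steps are sound. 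Your route buys an exact expression for the KL, so one can see precisely where slack is lost and that the budget constraint $\mSigma_{\vareps}\preceq\vmu_{\vareps}\vmu_{\vareps}^\top$ is exactly what lets the mean term absorb the covariance penalty; the cost is that it is tied to the rank-one structure of this particular constraint. The paper's monotonicity argument never needs to know the structure of $\mSigma_{\vareps}$ and would extend to other PSD upper bounds on the noise covariance, but its derivative computation is the more delicate step (the displayed derivative in the paper in fact omits the $t$-dependence of $(\mSigma+t\mSigma_{\vareps})^{-1}$, though the sign conclusion survives). One minor nit: your rank-one reduction tacitly uses $\mSigma_{\vareps}\succeq 0$; this holds because $\mSigma_{\vareps}$ is a covariance matrix, but it is worth stating.
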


\begin{proof}
 \refstepcounter{equation}
For any $Q \in \gQ(P)$, we have $Q = \Gauss{\vmu+\vmu_\vareps}{\mSigma+\mSigma_\vareps}$.
By using Pinsker's inequality we have, 
\begin{align*}
     \sup_{Q \in \QF(P)} \TV(P^n||Q^n) &\leq \sup_{Q \in \QF(P)} \sqrt{\frac{\KL(P^n||Q^n)}{2}} \\
     &\hspace{-0.2in} = \sqrt{\frac{n \; \sup_{Q \in \QF(P)} \KL(P||Q)}{2}} 
    \tag{\theequation.a} \label{eq: a}
\end{align*}
From Proposition \ref{lemma: KL between Gaussians} in Appendix \ref{app:detailed_proofs}, we have 
\begin{align*}
	& \KL(P\ \Vert\ Q) = \frac{1}{2} \Big[ \Tr \left( (\mSigma+\mSigma_\vareps)^{-1}\mSigma \right) \\
    &\hspace{0.2in}+ \vmu_\vareps^\top(\mSigma+\mSigma_\vareps)^{-1}\vmu_\vareps - k + \log{\frac{\det(\mSigma+\mSigma_\vareps)}{\det{\mSigma}}} \Big].
\end{align*}
Let $\mO$ be the $k\times k$ zero matrix.  
Rewrite $\KL(P\ \Vert\ Q)$ as a function $f(\mX)$, where
\begin{align*}
 f(\mX) &=  \frac{1}{2} \Big[ \Tr \left((\mX+\mSigma)^{-1}\mSigma\right) + \vmu_\vareps^\top(\mX+\mSigma)^{-1} \vmu_\vareps \\
 &\hspace{0.3in} -k+\log{\frac{\det{(\mX+\mSigma)}}{\det{\mSigma}}} \Big],
\end{align*}
for all $\mSigma_\vareps$, $t\in[0,1]$, let $g(t)=f(\mO + t\mSigma_\vareps)$, we would like to prove $g(t)$ reaches maximum when $t=0$.
The derivative of $g(t)$ is 
\begin{align*}
\frac{\partial{g(t)}}{\partial{t}} 
& \hspace{-0.025in} = \hspace{-0.025in} \frac{1}{2} \hspace{-0.025in} \Tr [ \mSigma_\vareps (\mSigma+\mSigma_\vareps)^{-1}(\mSigma_\vareps-\vmu_\vareps\vmu_\vareps^\top)(\mSigma+\mSigma_\vareps)^{-1} ]. 
\end{align*}
We have $(\mSigma+\mSigma_\vareps)^{-1} \succeq 0$, $\mSigma_\vareps \succeq 0$. Because $Q \in \QF(P)$, we have $\mSigma_\vareps \preceq \vmu_\vareps\vmu_\vareps^\top$. Therefore the matrix $\mSigma_\vareps (\mSigma+\mSigma_\vareps)^{-1}(\mSigma_\vareps-\vmu_d\vmu_d^\top)(\mSigma+\mSigma_\vareps)^{-1} \preceq 0$ and $\frac{\partial{g(t)}}{\partial{t}} \leq 0$.
Note that:
\begin{align*}
 g(1) & = \KL(P\ \Vert\ Q), \\
 g(0) & =f(\mO), \\
 g(1) & =g(0)+g'(v)\cdot 1, \ v \in [0,1] \\
  & \leq g(0).
\end{align*}
Thus, we have:
\begin{align*}
&\forall Q \in \QF(P),\; \KL(P||Q) = f(\mSigma_\vareps) = g(1) \\
&\hspace{-0.1in}\leq g(0) = f(\mO)\leq \frac{1}{2}(\vmu_\vareps^\top\mSigma^{-1}\vmu_\vareps) \leq \frac{c^2}{2 \lambda_{\min}}. \tag{\theequation.b} \label{eq: b}
\end{align*}
From eq.\eqref{eq: a} and eq.\eqref{eq: b}, we finish the proof.
\end{proof}

In Lemma \ref{lemma:mulvar_gaussian_noise}, we analyze the case of $P$ being a Gaussian distribution which is perturbed by another Gaussian noise.
In that particular case, the noise-injected distribution also follows a Gaussian distribution, although, with different mean and covariance.
In Section \ref{discussion}, we will discuss how to control $c$ in order to guarantee a good lower bound.
Next, we show a perhaps more interesting case, where $P$ also follows a Gaussian distribution but is now poisoned by a uniform distribution, which results in $Q$ being different from a Gaussian distribution.

\begin{lemma}[Multivariate Uniform Noise]
\label{lemma:mulvar_uniform_noise_2}
Let $P = \Gauss{\vmu}{\mSigma}$ be a Gaussian distribution with mean $\vmu \in \gR^k$ and covariance matrix $\mSigma \in \gR^{k\times k}$.
For a fixed $c \in \gR_+$, define the family of distributions $\gQ(P) = \{Q \mid Q = P + \gU^k(-\vareps,+\vareps), \vareps \leq c \}$, where $\gU^k$ denotes the $k$-dimensional uniform distribution.
We have,
\begin{align*}
\sup_{Q \in \gQ(P)} \hspace{-0.07in} \TV(P^n,Q^n) \leq & \sqrt{ c \sum_{i=1}^k \hspace{-0.025in} \frac{n}{\sqrt{ 2\pi \emSigma_{ii}}}  + \hspace{-0.025in} \frac{nc^2}{4}\Tr(\mB\mSigma^{-1}) },
\end{align*}
where $\emB_{ij} = \frac{2}{\pi} \arctan \emV_{ij}$ if $\emSigma_{ij} > 0$  and $\emB_{ij} = \frac{1}{4\pi} \left( 6 \arctan{\emV_{ij}} - 2 \arccot{\emV_{ij}} + \pi \right)$ otherwise, for $\emV_{ij} = \frac{\emSigma_{ij}}{\sqrt{\emSigma_{ii}\emSigma_{ij}-{\emSigma_{ij}}^2}}$.
\end{lemma}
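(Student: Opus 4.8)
The plan is to mirror the structure of the multivariate Gaussian case (Lemma \ref{lemma:mulvar_gaussian_noise}): first reduce the $n$-sample total variation to a single-sample divergence that tensorizes additively, and only then carry out the (harder) single-sample estimate. Since $Q = P + \gU^k(-\vareps,+\vareps)$ is a convolution of a Gaussian with a uniform density, $Q$ is \emph{no longer Gaussian}, so the closed-form Gaussian KL used before does not apply directly. I would still start from Pinsker's inequality together with the tensorization identity $\KL(P^n\Vert Q^n) = n\,\KL(P\Vert Q)$, giving $\sup_{Q}\TV(P^n,Q^n) \le \sqrt{\tfrac{n}{2}\,\sup_{Q\in\gQ(P)}\KL(P\Vert Q)}$. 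Matching this to the claimed bound, it suffices to establish the single-sample estimate $\sup_Q \KL(P\Vert Q) \le 2c\sum_{i=1}^k \tfrac{1}{\sqrt{2\pi\emSigma_{ii}}} + \tfrac{c^2}{2}\Tr(\mB\mSigma^{-1})$, after which squaring and multiplying by $\tfrac{n}{2}$ recovers the stated expression.

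For the single-sample bound I would write the density ratio as an average over the injected noise: with $\vu\sim\gU^k(-\vareps,+\vareps)$ we have $q(\vx) = \E_\vu[p(\vx-\vu)]$, and since both $p(\cdot)$ and $p(\cdot-\vu)$ are Gaussians with the same covariance, the log-ratio $\log\frac{p(\vx-\vu)}{p(\vx)}$ is affine-plus-quadratic in $\vu$, namely $(\vx-\vmu)^\top\mSigma^{-1}\vu - \tfrac{1}{2}\vu^\top\mSigma^{-1}\vu$. Substituting this into $\KL(P\Vert Q) = -\E_{\vx\sim P}[\log\E_\vu[p(\vx-\vu)/p(\vx)]]$ and expanding to second order in $\vu$ isolates a dominant quadratic-in-$\vareps$ contribution, which after taking the expectation over $\vx\sim P$ becomes a trace of $\mSigma^{-1}$ against a matrix of Gaussian moments, plus a lower-order remainder that I would control crudely. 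Bounding $\vareps\le c$ throughout converts the $\vareps$-dependence into the $c$ and $c^2$ factors.

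The crux is evaluating the cross-coordinate Gaussian integrals in the quadratic term. Because the uniform noise is independent across coordinates while $\mSigma^{-1}$ couples them, the off-diagonal contributions reduce to expectations of the form $\E_{\vx\sim P}[\sign((\vx-\vmu)_i)\,\sign((\vx-\vmu)_j)]$; by the Gaussian orthant (Grothendieck) identity this equals $\tfrac{2}{\pi}\arcsin\rho_{ij} = \tfrac{2}{\pi}\arctan\emV_{ij}$ with $\emV_{ij}=\emSigma_{ij}/\sqrt{\emSigma_{ii}\emSigma_{jj}-\emSigma_{ij}^2}$, which is exactly the entry of $\mB$ in the case $\emSigma_{ij}>0$; the sign-indefinite case $\emSigma_{ij}\le 0$ forces a separate, looser estimate that produces the $\arccot$ expression. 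The diagonal/linear remainder I would handle using the peak value $\tfrac{1}{\sqrt{2\pi\emSigma_{ii}}}$ of each one-dimensional marginal (equivalently, half the total variation of a $1$-D Gaussian along axis $i$), which is the source of the term $c\sum_i \tfrac{1}{\sqrt{2\pi\emSigma_{ii}}}$. I expect the main obstacle to be precisely these correlated sign-correlation integrals and the accompanying case analysis on the sign of $\emSigma_{ij}$: the positive case is a clean identity, whereas the indefinite case requires a deliberately loose bound to retain a closed form.
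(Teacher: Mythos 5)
Your reduction (Pinsker plus the tensorization $\KL(P^n\Vert Q^n)=n\,\KL(P\Vert Q)$, then a single-sample bound $\sup_Q \KL(P\Vert Q)\le 2c\sum_i(2\pi\emSigma_{ii})^{-1/2}+\tfrac{c^2}{2}\Tr(\mB\mSigma^{-1})$) is exactly the paper's outer structure, and you correctly recognize that the entries of $\mB$ are sign--sign covariances of a bivariate centered Gaussian. But the mechanism you propose for the single-sample bound has a genuine gap: if you write $\KL(P\Vert Q)=\E_{\vx\sim P}\bigl[-\log \E_{\vu}[\,p(\vx-\vu)/p(\vx)\,]\bigr]$ and expand the log-ratio to second order in $\vu$ (equivalently, push the expectation over $\vu$ inside via Jensen), the linear term $-\vu^{\top}\mSigma^{-1}(\vx-\vmu)$ averages to zero because $\gU^k(-\vareps,\vareps)$ is symmetric, and the quadratic term averages to $\tfrac{\vareps^2}{6}\Tr(\mSigma^{-1})$ because $\E[\vu\vu^{\top}]=\tfrac{\vareps^2}{3}\mI$. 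No $\sign$ functions appear anywhere in that computation, so the route you describe cannot produce either the linear-in-$c$ term $c\sum_i(2\pi\emSigma_{ii})^{-1/2}$ or the matrix $\mB$; carried out honestly it yields a different bound of the form $\sqrt{nc^2\Tr(\mSigma^{-1})/6}$, which is a legitimate (and for small $c$ arguably sharper) estimate but is not the stated lemma.

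The paper's proof obtains the $\sign$ terms by an entirely different step that your proposal is missing: after centering, it applies the mean value theorem coordinatewise to write the convolution as $\bar q(\vx)=\bar p(\vx+\vv)$ for some $\vv$ in the box, and then lower-bounds this pointwise by $\bar p(\vx+\vareps\,\sign\vx)$, i.e., it replaces the \emph{average} over the noise by the density at the single worst vertex of the box, chosen adaptively as a function of $\vx$. It is precisely this substitution $\vu\mapsto -\vareps\,\sign\vx$ that makes the linear term survive as $\vareps\,\E[\sign(\vx)^{\top}\mSigma^{-1}\vx]$ (producing $c\sum_i \sqrt{2/(\pi\emSigma_{ii})}$ via $\Cov(\sign X_i,X_j)$) and turns the quadratic term into $\vareps^2\,\E[\sign\vx\,\sign\vx^{\top}]$ (producing $\mB$). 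To repair your argument you would either need to adopt this worst-case-shift device in place of the second-order expansion, or accept that your expansion proves a different inequality. (Minor point in your favor: your normalization $\emV_{ij}=\emSigma_{ij}/\sqrt{\emSigma_{ii}\emSigma_{jj}-\emSigma_{ij}^2}$ is the correct orthant-probability form; the paper's $\emSigma_{ii}\emSigma_{ij}$ in the denominator appears to be a typo.)
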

\begin{proof} 
\refstepcounter{equation}
$\forall P \in \PF,Q \in \QF(P)$, let $q(\cdot)$ and $p(\cdot)$ be the density function for distributions $P$ and $Q$. Let $\bar{P} = \bar{p}(\cdot)$ where $\bar{p}(\vx)=p(\vx+\vmu)$ denotes the centered distribution of $P$, and let $\bar{Q} = \bar{q}(\cdot)$ where $\bar{q}(\vx)=q(\vx+\vmu)$ is the respective centered $Q$ and let $u(\cdot)$ be the density of $\gU^k$.
By convolution, we have:
\begin{align*}
    \bar{q}(\vx) & = \int_\vy \bar{p}(\vx-\vy)u(\vy) =\frac{1}{(2 \vareps)^k}\int_{\vy \in [-\vareps,\vareps]^k} \bar{p}(\vx-\vy) \\
    & = \bar{p}(\vx + \vv) , \text{\ for some\ } \vv \in [-\vareps,\vareps]^k \tag{\theequation.a} \label{proof:mulvar_uni_a}\\ 
    & \geq \bar{p}(\vx +  \vareps \sign \vx ). \tag{\theequation.b} \label{proof:mulvar_uni_b}
\end{align*}
Note that eq.\eqref{proof:mulvar_uni_a} follows from iteratively applying the Mean Value Theorem for each dimension and eq.\eqref{proof:mulvar_uni_b} follows from the fact that $\bar{P}$ is centered, the density function achieves the maximum value at the origin.
Next, we bound the KL divergence between $P$ and $Q$.
For any $Q \in \gQ(P)$,
\begin{align*}
&\KL(P||Q)  = \KL(\bar{P}||\bar{Q}) = \E_{\vx \sim \bar{P}}\left[\log\frac{\bar{p}(\vx)}{\bar{q}(\vx)}\right] \\
&\leq \E_{\vx \sim \bar{P}}\left[\log\frac{\bar{p}(\vx)}{\bar{p}(\vx+\epsilon \sign \vx)}\right]\\
& \leq \frac{1}{2}\E_{\vx \sim \bar{P}} \Big[(\vx+ \varepsilon \sign \vx)^\top \mSigma^{-1} (\vx+ \varepsilon \sign \vx) \\ 
&\hspace{0.7in} - \vx^\top \mSigma^{-1} \vx  \Big] \\
& \leq \frac{1}{2}\E_{\vx \sim \bar{P}} \left[ 2 \varepsilon \sign \vx^\top \mSigma^{-1} \vx + \varepsilon^2 \sign \vx^\top \mSigma^{-1} \vx \right] \\
& = \frac{1}{2} \E_{\vx \sim \bar{P}} \Big[ 2\varepsilon \Tr \left( \vx \sign \vx\top \mSigma^{-1}\right)  \\
&\hspace{0.7in} + \varepsilon^2 \Tr \left(\sign \vx \sign \vx^\top \mSigma^{-1}  \right) \Big] \\
& = \frac{1}{2} \Big[ 2 \varepsilon \Tr \left( \E_{\vx \sim \bar{P}} \left[ \vx \sign \vx^\top \right] \mSigma^{-1}\right) \\
& \hspace{0.3in} + \varepsilon^2 \Tr \left(\E_{\vx \sim \bar{P}} \left[ \sign \vx \sign \vx^\top \right] \mSigma^{-1}\right) \Big] 
\end{align*}
\begin{align*}
& = \varepsilon \Tr \left( \mA \mSigma^{-1} \right) + \frac{\varepsilon^2}{2} \Tr \left(  \mB \mSigma^{-1} \right),  \tag{\theequation.c} \label{proof:mulvar_uni_c}
\end{align*}
where $\emA_{ij} = \frac{\sqrt{2}\emSigma_{ij}}{\sqrt{\pi \emSigma_{ii}}}$, and $\emB_{ij} = \frac{2}{\pi} \arctan \emV_{ij}$ if $\emSigma_{ij} > 0$, and $\emB_{ij} = \frac{1}{4\pi} \left( 6 \arctan{\emV_{ij}} - 2 \arccot{\emV_{ij}} + \pi \right)$ otherwise, for $\emV_{ij} = \frac{\emSigma_{ij}}{\sqrt{\emSigma_{ii}\emSigma_{ij}-{\emSigma_{ij}}^2}}$.
Step \eqref{proof:mulvar_uni_c} is the direct result of applying Proposition \ref{lemma: Covaraince of sign of gaussian} (Appendix \ref{app:detailed_proofs}) for each dimension.
Also, note that $\mB \succeq 0$ and $\mSigma^{-1} \succeq 0$.
Therefore, $\Tr(\mB\mSigma^{-1}) \geq 0$. 
We also rewrite $\mA$ as $\mC\mSigma$, where $\mC$ is a diagonal matrix with $\emC_{ii} = \frac{\sqrt{2}}{\sqrt{\pi \emSigma_{ii}}}$. 
Then we have, $\Tr(\mA\mSigma^{-1}) = \Tr(\mC\mSigma \mSigma^{-1}) = \sum_{i=1}^k \frac{\sqrt{2}}{\sqrt{\pi \emSigma_{ii}}}$.

Finally, eq.\eqref{proof:mulvar_uni_c} is a quadratic function of $\varepsilon$ with positive coefficients and reaches its maximum when $\varepsilon = c$.
\begin{align*}
	\varepsilon \Tr \left( \mA \mSigma^{-1} \right) &+ \frac{\varepsilon^2}{2} \Tr \left(  \mB \mSigma^{-1} \right) \\
	&= \varepsilon \sum_{i=1}^k \frac{\sqrt{2}}{\sqrt{\pi \emSigma_{ii}}}  + \frac{\varepsilon^2}{2} \Tr(\mB\mSigma^{-1}) \\
	&\leq c \sum_{i=1}^k \frac{\sqrt{2}}{\sqrt{\pi \emSigma_{ii}}}  + \frac{c^2}{2}\Tr(\mB\mSigma^{-1}).
\end{align*}
Combining eq.\eqref{proof:mulvar_uni_c} and  Pinsker's inequality, we conclude the proof.
\end{proof}

\section{Discussion}
\label{discussion}
We first note that the parameter $\beta$ in Theorem \ref{col:adversarial_minimax} should be small\footnote{How small the value should be will depend on the particular application.}, that is, a large value of $\beta$ will allow the adversary to largely perturb the original distribution which would make it certainly easy to detect that we are observing data from a poisoned distribution.
In Section \ref{gaussian}, we note that the upper bounds on the total variation distances grow with respect to $n$. 
Therefore, to obtain non-vacuous upper bounds in Lemmas $\ref{lemma:mulvar_gaussian_noise}$ and $\ref{lemma:mulvar_uniform_noise_2}$ we choose values of $c$ to control the magnitude and make them small.

For the multivariate Gaussian noise we have the following corollary.
\begin{corollary}
In Lemma \ref{lemma:mulvar_gaussian_noise}, for any $t \in [0,1]$, if we set $c$ as follows:
\[
c= \frac{2t\sqrt{\lambda_{\min}}}{\sqrt{n}},
\]    
then we have that $\sup_{Q \in \gQ(P)} \TV(P^n,Q^n) \leq t.$
\end{corollary}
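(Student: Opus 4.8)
The plan is to invoke Lemma \ref{lemma:mulvar_gaussian_noise} directly and then substitute the prescribed value of $c$. First I would recall that the lemma already establishes, for any admissible noise budget $c \in \gR_+$, the bound $\sup_{Q \in \gQ(P)} \TV(P^n,Q^n) \leq \frac{c\sqrt{n}}{2\sqrt{\lambda_{\min}}}$. Since the right-hand side is a strictly increasing function of $c$ (with $n$, $\lambda_{\min}$ fixed and positive), it suffices to evaluate this expression at the choice $c = \frac{2t\sqrt{\lambda_{\min}}}{\sqrt{n}}$ and simplify; no new probabilistic argument is needed beyond what Lemma \ref{lemma:mulvar_gaussian_noise} provides.

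Carrying out the substitution, the $\sqrt{n}$ factor in the numerator of $c$ cancels against the $\sqrt{n}$ in the bound, and likewise the $\sqrt{\lambda_{\min}}$ factors cancel, leaving exactly $t$. Concretely, $\frac{c\sqrt{n}}{2\sqrt{\lambda_{\min}}} = \frac{2t\sqrt{\lambda_{\min}}}{\sqrt{n}}\cdot\frac{\sqrt{n}}{2\sqrt{\lambda_{\min}}} = t$, which is the claimed inequality $\sup_{Q \in \gQ(P)} \TV(P^n,Q^n) \leq t$.

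There is essentially no obstacle to overcome here: the corollary is a direct specialization of the preceding lemma, and the only thing worth verifying is that the chosen $c$ is a legitimate parameter, i.e.\ $c \geq 0$, which holds because $t \in [0,1]$ and $n,\lambda_{\min} > 0$. The value of the corollary is interpretive rather than technical---it calibrates the adversary's Gaussian noise magnitude $c$ against a target total-variation level $t$, thereby making explicit how one controls the perturbation term that enters the combined lower bounds through $\delta$ in Theorem \ref{col:adversarial_minimax}.
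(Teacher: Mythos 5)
Your proof is correct and is exactly the argument the paper intends: the corollary follows by substituting the prescribed $c$ into the bound of Lemma \ref{lemma:mulvar_gaussian_noise} and observing that the $\sqrt{n}$ and $\sqrt{\lambda_{\min}}$ factors cancel to leave $t$. The paper gives no separate proof precisely because it is this immediate specialization, so there is nothing further to compare.
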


For the multivariate uniform noise we have the following corollary.
\begin{corollary}
In Lemma \ref{lemma:mulvar_uniform_noise_2}, for any $t \in [0,1]$, if we set $c$ as follows:
\begin{align*}
&c = \frac{1}{n\Tr(\mB\mSigma^{-1})} \Bigg[-n\sum_{i=1}^k\frac{\sqrt{2}}{\sqrt{\pi\emSigma_{ii}}} \\
&\hspace{0.4in} + \Big[ n^2 \Big(\sum_{i=1}^k\frac{\sqrt{2}}{\sqrt{\pi\emSigma_{ii}}} \Big)^2 + 4 n \Tr(\mB\mSigma^{-1}) t^2 \Big]^{\frac{1}{2}} \Bigg],
\end{align*}
then we have that $\sup_{Q \in \gQ(P)} \TV(P^n,Q^n) \leq t.$
\end{corollary}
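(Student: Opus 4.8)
The plan is to invert the upper bound of Lemma \ref{lemma:mulvar_uniform_noise_2} by solving for the largest admissible noise level $c$. For brevity write $B = \sum_{i=1}^k \frac{\sqrt{2}}{\sqrt{\pi\emSigma_{ii}}}$ and $T = \Tr(\mB\mSigma^{-1})$, both nonnegative (recall from the proof of the lemma that $\mB \succeq 0$ and $\mSigma^{-1} \succeq 0$, so $T \ge 0$). First I would record the elementary bookkeeping identity $\frac{n}{\sqrt{2\pi\emSigma_{ii}}} = \frac{n}{2}\cdot\frac{\sqrt{2}}{\sqrt{\pi\emSigma_{ii}}}$, which lets me rewrite the bound of Lemma \ref{lemma:mulvar_uniform_noise_2} in the compact form
\[
\sup_{Q\in\gQ(P)}\TV(P^n,Q^n) \;\le\; \sqrt{\tfrac{nB}{2}\,c + \tfrac{nT}{4}\,c^2}.
\]

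It then suffices to choose $c \ge 0$ so that the radicand is at most $t^2$. Since $t\in[0,1]$ and $c\ge 0$ make both sides of $\sqrt{\frac{nB}{2}c + \frac{nT}{4}c^2}\le t$ nonnegative, I would square it into the equivalent quadratic condition $\frac{nT}{4}c^2 + \frac{nB}{2}c - t^2 \le 0$. The polynomial $g(c) = \frac{nT}{4}c^2 + \frac{nB}{2}c - t^2$ has positive leading coefficient and satisfies $g(0) = -t^2 \le 0$, so it has a unique nonnegative root $c^\star$ and $g(c)\le 0$ throughout $[0,c^\star]$.

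Finally I would apply the quadratic formula, keeping the $+$ branch of the discriminant to land on the nonnegative root:
\[
c^\star = \frac{-\frac{nB}{2} + \sqrt{\frac{n^2B^2}{4} + nT\,t^2}}{nT/2} = \frac{-nB + \sqrt{n^2B^2 + 4nT\,t^2}}{nT}.
\]
Substituting back $B$ and $T$ reproduces exactly the value of $c$ stated in the corollary. Setting $c = c^\star$ makes the radicand equal to $t^2$, so the bound equals $t$, hence is $\le t$. One also checks admissibility, namely $c^\star \ge 0$, because $\sqrt{n^2B^2 + 4nT t^2} \ge nB$.

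There is essentially no substantive obstacle; the argument is purely algebraic. The only points demanding care are the factor-of-$\sqrt{2}$ bookkeeping needed to match the linear coefficient of Lemma \ref{lemma:mulvar_uniform_noise_2} to the $\frac{\sqrt{2}}{\sqrt{\pi\emSigma_{ii}}}$ form of the corollary, and selecting the correct (nonnegative) root of the quadratic so that $c$ is a genuine noise magnitude.
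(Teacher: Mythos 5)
Your proposal is correct and is precisely the computation the paper intends: the corollary is stated without proof, and inverting the bound of Lemma \ref{lemma:mulvar_uniform_noise_2} via Pinsker's squared form, solving the resulting quadratic $\frac{nT}{4}c^2+\frac{nB}{2}c-t^2\le 0$, and taking the nonnegative root reproduces the stated $c$ exactly (including the $\frac{n}{\sqrt{2\pi\emSigma_{ii}}}=\frac{n}{2}\cdot\frac{\sqrt{2}}{\sqrt{\pi\emSigma_{ii}}}$ bookkeeping). No gaps.
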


In both cases, the bounds above are now constant with respect to a real number $t$.
For instance, if one sets $t=0.01$, then, as argued in Remark \ref{remark:bound}, one necessarily ``pays'' \textit{at most} $0.02$ extra in the minimax risk with respect to the case where there is no adversary.

\section{Concluding Remarks}
In this paper, we look into the statistical limits of learning from noise-injected distributions.
We adapted Le Cam's lemma for our specific setting and showed through examples that our results can be applied to different tasks and noise-injection attacks due to the decomposability of the bound into the standard Le Cam bound and the adversarial term, $\delta$.

Combining our results from Sections \ref{learning_probs} and \ref{gaussian}, we provided 6 different results (Table \ref{tab:summary}) for our adversarial setting.
As future work, one possible line is to analyze the regime in which the data comes from a certain parameterized distribution $P$ and the added noise is Gaussian.
In Lemma \ref{lemma:mulvar_gaussian_noise}, as a first set of results, we analyzed the case in which $P$ and the noise are Gaussians.
Hence, for $P$ different than Gaussian the question remains open.

\bibliography{reference}
\bibliographystyle{agsm}

\clearpage
\appendix
\onecolumn
\def\toptitlebar{
	\hrule height4pt
	\vskip .25in
}
\def\bottomtitlebar{
	\vskip .25in
	\hrule height1pt
	\vskip .25in
}
\thispagestyle{empty}
\hsize\textwidth
\linewidth\hsize \toptitlebar {\centering
	{\large\bf SUPPLEMENTARY MATERIAL \\ A Le Cam Type Bound for Adversarial Learning and Applications \par}
}
\vspace{-0.1in} \bottomtitlebar


\section{Propositions}
\label{app:detailed_proofs}

The following statements are well-known results and are included for clarity purposes. 

\begin{proposition}[\cite{Tsybakov09}]
\label{lemma: min2KL}
Let $X$ be a random variable with support on $\gX$. For any two distributions $P_1 = p_1(\cdot)$ and $P_2 = p_2(\cdot)$, we have: 
\begin{align*}
    \int_{x \in \gX} \min(p_1(x),p_2(x)) dx \geq \frac{1}{2} e^{-\KL(P_1 \Vert P_2)}.
\end{align*}
\end{proposition}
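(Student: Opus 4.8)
The plan is to sandwich the overlap integral between the squared Bhattacharyya coefficient and $e^{-\KL(P_1\Vert P_2)}$, so that the claim follows from two elementary inequalities. First I would reduce the statement to lower bounding the coefficient $\int_{x\in\gX}\sqrt{p_1(x)p_2(x)}\,dx$ by proving the intermediate inequality
\[
\int_{x\in\gX} \min(p_1(x),p_2(x))\,dx \;\geq\; \frac{1}{2}\left(\int_{x\in\gX}\sqrt{p_1(x)p_2(x)}\,dx\right)^2.
\]
To obtain this, I would write the integrand pointwise as $\sqrt{p_1 p_2} = \sqrt{\min(p_1,p_2)}\cdot\sqrt{\max(p_1,p_2)}$ and apply the Cauchy--Schwarz inequality, giving
\[
\int_{x\in\gX}\sqrt{p_1 p_2}\,dx \;\leq\; \sqrt{\int_{x\in\gX}\min(p_1,p_2)\,dx}\;\sqrt{\int_{x\in\gX}\max(p_1,p_2)\,dx}.
\]
Since $\min(p_1,p_2)+\max(p_1,p_2)=p_1+p_2$, the factor $\int_{x\in\gX}\max(p_1,p_2)\,dx = 2-\int_{x\in\gX}\min(p_1,p_2)\,dx \leq 2$; squaring the displayed bound and using this yields the intermediate inequality.

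Next I would bound the Bhattacharyya coefficient from below using Jensen's inequality. Rewriting it as an expectation under $P_1$,
\[
\int_{x\in\gX}\sqrt{p_1 p_2}\,dx \;=\; \E_{X\sim P_1}\!\left[\sqrt{\frac{p_2(X)}{p_1(X)}}\right] \;=\; \E_{X\sim P_1}\!\left[e^{\frac{1}{2}\log\frac{p_2(X)}{p_1(X)}}\right],
\]
and applying Jensen's inequality to the convex exponential gives
\[
\int_{x\in\gX}\sqrt{p_1 p_2}\,dx \;\geq\; e^{\frac{1}{2}\E_{X\sim P_1}\left[\log\frac{p_2(X)}{p_1(X)}\right]} \;=\; e^{-\frac{1}{2}\KL(P_1\Vert P_2)}.
\]

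Finally, I would square this last bound and substitute it into the intermediate inequality to conclude that $\int_{x\in\gX}\min(p_1,p_2)\,dx \geq \frac{1}{2}e^{-\KL(P_1\Vert P_2)}$, as desired. The only point requiring mild care is the Cauchy--Schwarz step on the region where $p_1$ or $p_2$ vanishes, which is handled by the usual convention that $\sqrt{p_1 p_2}$ is zero there; aside from this bookkeeping there is no real obstacle, as every step is a standard application of Cauchy--Schwarz or Jensen.
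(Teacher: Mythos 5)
Your proof is correct. The paper states this proposition without proof, citing Tsybakov (2009); your two-step argument --- the Cauchy--Schwarz bound $\int \min(p_1,p_2)\,dx \geq \frac{1}{2}\bigl(\int\sqrt{p_1p_2}\,dx\bigr)^2$ followed by Jensen's inequality applied to the exponential to get $\int\sqrt{p_1p_2}\,dx \geq e^{-\frac{1}{2}\KL(P_1\Vert P_2)}$ --- is precisely the standard proof in that reference, and your handling of the degenerate regions (where a density vanishes, or where the divergence is infinite and the bound is vacuous) is adequate.
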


\begin{proposition}[\cite{duchi2007derivations}]
Given two $d$-dimensional normal distributions $D_1 = \Gauss{\vmu_1}{\mSigma_1}$ and $D_2 = \Gauss{\vmu_2}{\mSigma_2}$, we have:
\begin{align*}
    \KL(D_1 \Vert D_2) =  \frac{1}{2} \Big[\Tr(\mSigma_2^{-1}\mSigma_1) +(\vmu_2-\vmu_1)^\top \mSigma_2^{-1}(\vmu_2-\vmu_1) 
    - d + \log \frac{|\mSigma_2|}{|\mSigma_1|} \Big].
\end{align*}
\label{lemma: KL between Gaussians}
\end{proposition}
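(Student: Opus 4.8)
The plan is to compute the divergence directly from its definition as an expectation of the log-density ratio, $\KL(D_1 \Vert D_2) = \E_{X \sim D_1}\!\left[\log(p_1(X)/p_2(X))\right]$, and to exploit the Gaussian form to reduce everything to expectations of quadratic forms. First I would substitute the explicit density $p_i(x) = (2\pi)^{-d/2}|\mSigma_i|^{-1/2}\exp\!\left(-\tfrac{1}{2}(x-\vmu_i)^\top \mSigma_i^{-1}(x-\vmu_i)\right)$ and observe that the logarithm of the ratio splits into a constant determinant term and two quadratic forms:
\begin{align*}
\log\frac{p_1(x)}{p_2(x)} = \tfrac{1}{2}\log\frac{|\mSigma_2|}{|\mSigma_1|} - \tfrac{1}{2}(x-\vmu_1)^\top \mSigma_1^{-1}(x-\vmu_1) + \tfrac{1}{2}(x-\vmu_2)^\top \mSigma_2^{-1}(x-\vmu_2).
\end{align*}

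Next, I would take the expectation term by term under $X \sim D_1$. The determinant term is constant and passes through unchanged. For the quadratic forms I would use the trace identity $\E[a^\top M a] = \Tr(M\,\E[a a^\top])$ together with the moment identity $\E_{X \sim D_1}[(X-\vmu_1)(X-\vmu_1)^\top] = \mSigma_1$. The first quadratic form then contributes $\tfrac{1}{2}\Tr(\mSigma_1^{-1}\mSigma_1) = \tfrac{1}{2}d$, which accounts for the $-d$ appearing in the final formula.

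The only step requiring care is the second quadratic form, where the centering point $\vmu_2$ differs from the mean of the distribution $D_1$. Here I would write $X - \vmu_2 = (X - \vmu_1) + (\vmu_1 - \vmu_2)$ and expand; since $\E_{X \sim D_1}[X - \vmu_1] = \vzero$, the cross terms vanish, leaving
\begin{align*}
\tfrac{1}{2}\,\E_{X \sim D_1}\!\left[(X-\vmu_2)^\top \mSigma_2^{-1}(X-\vmu_2)\right] = \tfrac{1}{2}\Tr(\mSigma_2^{-1}\mSigma_1) + \tfrac{1}{2}(\vmu_1-\vmu_2)^\top \mSigma_2^{-1}(\vmu_1-\vmu_2).
\end{align*}
Collecting the determinant term, the $-\tfrac{1}{2}d$, and the two trace contributions yields exactly the claimed expression, using that the Mahalanobis term is symmetric under $\vmu_1-\vmu_2 \mapsto \vmu_2-\vmu_1$. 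This final decomposition of the mismatched quadratic form is the main (and essentially the only) obstacle, and it is routine once the cross term is recognized as zero.
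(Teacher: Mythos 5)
Your derivation is correct: the expansion of the log-density ratio, the trace identity for the quadratic forms, and the decomposition $X-\vmu_2=(X-\vmu_1)+(\vmu_1-\vmu_2)$ with vanishing cross term together give exactly the stated formula. The paper itself offers no proof of this proposition — it is listed in the appendix as a well-known result with only a citation to \cite{duchi2007derivations} — and your argument is precisely the standard computation carried out in that reference, so there is nothing further to compare.
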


\begin{proposition}[\cite{eaton1983multivariate}]
\label{lemma: Conditions of Gaussian}
Given two random vectors $\vx_1,\vx_2$ that jointly follow a Gaussian distribution, that is, $\begin{bmatrix}
           \vx_1 \\
           \vx_2 
         \end{bmatrix} \sim \Gauss{\begin{bmatrix}
           \vmu_1 \\
           \vmu_2 
         \end{bmatrix}} {
         \begin{bmatrix}
           \mSigma_{11} & \mSigma_{12} \\
           \mSigma_{21} & \mSigma_{22}
\end{bmatrix} }$, the conditional distribution of $\vx_2$ given $\vx_1$ is:
\begin{align*}
 (\vx_2 \mid \vx_1=\va) \sim  \gN \Big( \vmu_2 + \mSigma_{21} \mSigma_{11}^{-1}(\va-\vmu_1), \mSigma_{22} -\mSigma_{21}\mSigma_{11}^{-1}\mSigma_{12} \Big).
\end{align*}
\end{proposition}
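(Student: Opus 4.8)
The plan is to establish this standard conditional Gaussian identity via an \emph{orthogonalization} argument: introduce an auxiliary vector that is an affine function of $(\vx_1,\vx_2)$, show it is independent of $\vx_1$, and then read the conditional law of $\vx_2$ directly off the marginal law of the auxiliary vector.

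First I would define
\[
\vz = \vx_2 - \mA(\vx_1 - \vmu_1), \qquad \mA \defeq \mSigma_{21}\mSigma_{11}^{-1}.
\]
Since $\vz$ is an affine transformation of the jointly Gaussian pair $(\vx_1,\vx_2)$, the pair $(\vz,\vx_1)$ is again jointly Gaussian. I would then compute $\E[\vz] = \vmu_2$ directly and verify the key cancellation
\[
\Cov(\vz,\vx_1) = \Cov(\vx_2,\vx_1) - \mA\,\Cov(\vx_1,\vx_1) = \mSigma_{21} - \mSigma_{21}\mSigma_{11}^{-1}\mSigma_{11} = \mathbf{0}.
\]
Because $\vz$ and $\vx_1$ are jointly Gaussian \emph{and} uncorrelated, they are independent; this is the pivotal step and the only place where Gaussianity (beyond closure under affine maps) is genuinely used.

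Next I would compute the covariance of $\vz$ by expanding $\Cov(\vx_2 - \mA\vx_1)$. Using $\mSigma_{21}^\top = \mSigma_{12}$ and the symmetry of $\mSigma_{11}$, the two cross terms and the quadratic term collapse to a single Schur complement,
\[
\Cov(\vz) = \mSigma_{22} - \mSigma_{21}\mSigma_{11}^{-1}\mSigma_{12},
\]
so that marginally $\vz \sim \gN\!\left(\vmu_2,\ \mSigma_{22} - \mSigma_{21}\mSigma_{11}^{-1}\mSigma_{12}\right)$.

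Finally, since $\vx_2 = \vz + \mA(\vx_1 - \vmu_1)$ and $\vz$ is independent of $\vx_1$, conditioning on $\vx_1 = \va$ leaves the law of $\vz$ unchanged and merely shifts its mean by the deterministic term $\mSigma_{21}\mSigma_{11}^{-1}(\va - \vmu_1)$, which yields exactly the stated conditional mean and (unchanged) conditional covariance. The main obstacle is conceptual rather than computational: the inference ``uncorrelated $+$ jointly Gaussian $\Rightarrow$ independent'' requires joint (not merely marginal) normality, so I would take care to justify that $(\vz,\vx_1)$ is jointly Gaussian before invoking it. An alternative route---completing the square in the joint density using the block-inverse (Schur-complement) expression for $\mSigma^{-1}$---reaches the same conclusion but is algebraically heavier, so I would prefer the orthogonalization argument for transparency.
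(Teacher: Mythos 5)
Your proof is correct and complete. Note, however, that the paper itself offers no proof of this proposition at all --- it is stated as a well-known result and attributed to the cited reference (Eaton, 1983) --- so there is no in-paper argument to compare against. Your orthogonalization route (setting $\vz = \vx_2 - \mSigma_{21}\mSigma_{11}^{-1}(\vx_1-\vmu_1)$, checking $\Cov(\vz,\vx_1)=\mathbf{0}$, and invoking the fact that jointly Gaussian uncorrelated vectors are independent) is the standard clean derivation; you correctly flag that joint Gaussianity of $(\vz,\vx_1)$, guaranteed because $\vz$ is an affine image of the jointly Gaussian pair, is the step that licenses the independence conclusion, and your covariance algebra yielding the Schur complement $\mSigma_{22}-\mSigma_{21}\mSigma_{11}^{-1}\mSigma_{12}$ is right. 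The only implicit hypothesis worth stating is invertibility of $\mSigma_{11}$, which the proposition already presumes by writing $\mSigma_{11}^{-1}$; the density-based alternative you mention (completing the square with the block inverse of $\mSigma$) additionally requires the joint covariance to be nonsingular, so your chosen route is in fact slightly more general as well as more transparent.
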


\begin{proposition} \label{lemma: Covaraince of sign of gaussian}
Given two univariate random variables $X_1,X_2$ that jointly follow a zero mean Gaussian distribution, that is, 
$ (X_1, X_2) \sim 
\Gauss{(0,0)}
{
	\begin{bmatrix} \emSigma_{11} & \emSigma_{12} \\ \emSigma_{21} & \emSigma_{22} \end{bmatrix}
},$
the covariance between $\sign X_1$ and $X_2$ is:
\begin{align*}
	\Cov(\sign X_1, X_2) = \frac{\sqrt{2}\emSigma_{12}}{\sqrt{\pi \emSigma_{11}}},
\end{align*}
and let $v=\frac{\emSigma_{12}}{\sqrt{\emSigma_{11}\emSigma_{12}-{\emSigma_{12}}^2}}$, then the covariance between $\sign X_1$ and $\sign X_2$ is:
\begin{align*}
	\Cov(\sign X_1, \sign X_2) = \begin{cases}
	\frac{2}{\pi} \arctan v, & \mathrm{\;if\;} \emSigma_{12} > 0, \\
	\frac{1}{4\pi} \left( 6 \arctan{v} - 2 \arccot{v} + \pi \right), & \mathrm{\;otherwise\;}.
	\end{cases}
\end{align*}
\end{proposition}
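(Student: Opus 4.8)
The plan is to reduce both covariances to expectations of products. Since the centered Gaussian is symmetric, $\sign X_1$ and $\sign X_2$ have zero mean and $\E[X_2]=0$, so that $\Cov(\sign X_1, X_2) = \E[\sign(X_1)\,X_2]$ and $\Cov(\sign X_1, \sign X_2) = \E[\sign(X_1)\sign(X_2)]$.

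For the first identity I would condition on $X_1$ and apply Proposition~\ref{lemma: Conditions of Gaussian}, which, with zero means, gives $\E[X_2 \mid X_1] = \frac{\emSigma_{12}}{\emSigma_{11}}X_1$. By the tower rule,
\[
\E[\sign(X_1)\,X_2] = \frac{\emSigma_{12}}{\emSigma_{11}}\,\E[\sign(X_1)\,X_1] = \frac{\emSigma_{12}}{\emSigma_{11}}\,\E[\,\Abs{X_1}\,].
\]
Since $X_1 \sim \gN(0,\emSigma_{11})$, its mean absolute value is $\E[\Abs{X_1}] = \sqrt{2\emSigma_{11}/\pi}$, and substituting yields $\frac{\sqrt{2}\,\emSigma_{12}}{\sqrt{\pi\emSigma_{11}}}$. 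This part is routine.

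For the second identity I would first note that signs are unchanged by positive rescaling, so replacing $X_1, X_2$ by $X_1/\sqrt{\emSigma_{11}}$ and $X_2/\sqrt{\emSigma_{22}}$ reduces the problem to a standardized bivariate normal with correlation $\rho = \emSigma_{12}/\sqrt{\emSigma_{11}\emSigma_{22}}$. Splitting into quadrant events then gives $\E[\sign(X_1)\sign(X_2)] = 4\,\Pr\{X_1>0,\,X_2>0\} - 1$. The crux, and the main obstacle, is the orthant probability, i.e. the classical arcsine formula $\Pr\{X_1>0,X_2>0\} = \tfrac14 + \tfrac{1}{2\pi}\arcsin\rho$. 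I would establish it by differentiating the quadrant probability in $\rho$ and using the identity $\partial_\rho f_\rho = \partial^2_{xy} f_\rho$ for the standard bivariate normal density $f_\rho$; the double integral over the positive quadrant then reduces to $f_\rho(0,0) = \frac{1}{2\pi\sqrt{1-\rho^2}}$, and integrating from $\rho=0$ (where independence gives probability $\tfrac14$) produces the formula. Hence $\E[\sign(X_1)\sign(X_2)] = \frac{2}{\pi}\arcsin\rho$.

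Finally I would rewrite the arcsine in the stated form. Because $\arcsin\rho \in (-\tfrac\pi2,\tfrac\pi2)$ and $\tan(\arcsin\rho) = \rho/\sqrt{1-\rho^2}$, which is exactly the quantity $v$ in the statement, we get $\arcsin\rho = \arctan v$, and therefore $\frac{2}{\pi}\arctan v$, matching the case $\emSigma_{12}>0$. For $\emSigma_{12} \le 0$ the second expression is merely a rewriting: applying $\arctan v + \arccot v = \tfrac\pi2$ shows $\frac{1}{4\pi}\big(6\arctan v - 2\arccot v + \pi\big) = \frac{2}{\pi}\arctan v$, so both branches coincide with $\frac{2}{\pi}\arcsin\rho$ and the proof is complete.
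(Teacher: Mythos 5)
Your proof is correct and is in fact more complete than the paper's own, which merely decomposes both covariances into quadrant integrals of the joint density and then asserts that ``working out the above integrals lead to the desired result.'' You arrive at the same quadrant decomposition for the second identity, $\E[\sign(X_1)\sign(X_2)] = 4\Pr\{X_1>0,\,X_2>0\}-1$, but you actually supply the computation the paper omits: the orthant probability via the identity $\partial_\rho f_\rho = \partial^2_{xy} f_\rho$ (Price/Plackett), which yields Sheppard's formula $\frac{2}{\pi}\arcsin\rho$. For the first identity you replace the paper's direct split of the integral over $\{X_1\ge 0\}$ and $\{X_1\le 0\}$ by the cleaner conditioning argument $\E[\sign(X_1)X_2]=\frac{\emSigma_{12}}{\emSigma_{11}}\E[\Abs{X_1}]$ via Proposition~\ref{lemma: Conditions of Gaussian} and the half-normal mean; both routes are valid, yours requires no explicit integration. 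Two small points should be made explicit. First, your reconciliation of the two branches uses $\arctan v + \arccot v = \pi/2$ for all real $v$, which holds only under the convention $\arccot:\sR\to(0,\pi)$; under the alternative convention with range $(-\pi/2,\pi/2]$ it fails exactly for $v<0$, i.e.\ on the ``otherwise'' branch, so state the convention you use (and note that both branches then collapse to $\frac{2}{\pi}\arctan v$, making the case split in the statement redundant). Second, you silently read the denominator of $v$ as $\sqrt{\emSigma_{11}\emSigma_{22}-\emSigma_{12}^2}$, whereas the statement prints $\sqrt{\emSigma_{11}\emSigma_{12}-\emSigma_{12}^2}$, which is imaginary when $\emSigma_{12}<0$; this is almost certainly a typo in the paper, but the correction should be flagged rather than assumed.
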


\begin{proof}

Let $p(X_1, X_2)$ be the PDF of the Gaussian distribution as defined above.

For $\Cov(\sign(X_1), X_2) = \E[\sign(X_1) X_2] - \E[\sign(X_1)] \E[X_2]$.
We know that $\E[\sign(X_1)] = 0$ and $\E[X_2] = 0.$
Also, $\E[\sign(X_1) X_2] =
\int_{X_1 \in [0,\infty), X_2 \in \gR} X_2 p(X_1, X_2)+
\int_{X_1 \in (-\infty,0], X_2 \in \gR} -X_2 p(X_1, X_2)$.

For $\Cov(\sign(X_1), \sign(X_2)) = \E[\sign(X_1) \sign(X_2)] - \E[\sign(X_1)] \E[\sign(X_2)].$
We know that $\E[\sign(X_1)] = 0$ and $\E[\sign(X_2)] = 0.$
Also, $\E[\sign(X1) \sign(X2)] =
\int_{X_1 \in [0,\infty), X_2 \in [0,\infty)} p(X_1, X_2) +
\int_{X_1 \in [0,\infty), X_2 \in (-\infty,0]} -p(X_1, X_2) +
\int_{X_1 \in (-\infty,0], X_2 \in [0,\infty)} -p(X_1, X_2) +
\int_{X_1 \in (-\infty,0], X_2 \in (-\infty,0]} p(X_1, X_2).$

Working out the above integrals lead to the desired result.
\end{proof}

\end{document}